  \newlength{\defbaselineskip}
\newcites{sec}{Secondary Literature}
\patchcmd{\thebibliography}{\section*{\refname}}{}{}{}
\definecolor{blue-violet}{rgb}{0.54, 0.17, 0.89}
\newcommand{\crcchange}[1]{#1}
\setlist{nosep}
\theoremstyle{definition}
\declaretheorem{definition}
\theoremstyle{plain}
\declaretheorem{lemma}
\declaretheorem{theorem}
\declaretheorem{statement}
\newcommand{\Abs}[1]{\left| #1 \right| }
\newcommand{\Prob}[2][]{\underset{#1}{\mathbf{P}}\left( \hiderel{#2} \right) }
\newcommand{\Probc}[2]{\mathbf{P}\left( \hiderel{#1} \middle | \hiderel{#2} \right) }
\newcommand{\norm}[1]{\left\| #1 \right\|}
\newcommand{\tvdist}[1]{\left\| #1 \right\|_{\mathrm{TV}}}
\title{Rapidly Mixing Gibbs Sampling for a Class of Factor Graphs Using Hierarchy Width}
  \author{
  Christopher De Sa$^\dagger$,
  Ce Zhang$^\ddagger$,
  Kunle Olukotun$^\dagger$, and
  Christopher R{\'e}$^\dagger$ \\
  $\dagger$ Stanford University,
  $\ddagger$ University of Wisconsin-Madison \\
  \texttt{cdesa@stanford.edu},
  \texttt{czhang@cs.wisc.edu}, \\
  \texttt{kunle@stanford.edu},
  \texttt{chrismre@stanford.edu}
  }
  \author{
  Christopher De Sa,
  Ce Zhang,
  Kunle Olukotun, and
  Christopher R{\'e} \\
  \texttt{cdesa@stanford.edu},
  \texttt{czhang@cs.wisc.edu}, \\
  \texttt{kunle@stanford.edu},
  \texttt{chrismre@stanford.edu} \\
  Departments of Electrical Engineering and Computer Science\\
  Stanford University, Stanford, CA 94309
  }
\patchcmd{\thebibliography}{\section*{\refname}}{}{}{}
\begin{document}

\maketitle

\begin{abstract}
Gibbs sampling on factor graphs is a widely used inference technique,
which often produces good empirical results. Theoretical guarantees
for its performance are weak: even for tree structured graphs, the
mixing time of Gibbs may be exponential in the number of variables. To
help understand the behavior of Gibbs sampling, we introduce a new
(hyper)graph property, called \emph{hierarchy width}. We show that
under suitable conditions on the weights, bounded hierarchy width
ensures polynomial mixing time.  Our study of hierarchy width is in
part motivated by a class of factor graph templates,
\emph{hierarchical templates}, which have bounded hierarchy
width---regardless of the data used to instantiate them. We demonstrate
a rich application from natural language processing in which Gibbs
sampling provably mixes rapidly and achieves accuracy that exceeds
human volunteers.
\end{abstract}

\section{Introduction}

We study inference on factor graphs using Gibbs sampling, the de facto
Markov Chain Monte Carlo (MCMC)
method~\cite[p. 505]{koller2009probabilistic}. Specifically,
our goal is to compute the
marginal distribution of some \emph{query} variables using Gibbs
sampling, given \emph{evidence} about some other variables and a set
of factor weights. We focus on the case where all variables are discrete. In
this situation, a Gibbs sampler randomly updates a single variable at
each iteration by sampling from its conditional distribution given the
values of all the other variables in the model.  Many systems---such
as Factorie~\cite{mccallum2009factorie}, 
OpenBugs~\cite{lunn2009bugs}, PGibbs~\cite{gonzalez2011parallel},
DimmWitted~\cite{zhang2014dimmwitted}, and
others~\cite{smola2010architecture,newman2007distributed,NIPS2012_4832}---use Gibbs
sampling for inference
because it is fast to run, simple to implement, and often
produces high quality empirical results. However, theoretical
guarantees about Gibbs are lacking. The aim of the technical result
of this paper is to
provide new cases in which one can guarantee that Gibbs gives
accurate results.

For an MCMC sampler like Gibbs sampling, the standard measure of
efficiency is the \emph{mixing time} of the underlying Markov chain.
We say that a Gibbs sampler \emph{mixes rapidly} over a class of
models if its mixing time is at most polynomial in the number of
variables in the model.  Gibbs sampling is known to mix rapidly for
some models.  For example, Gibbs sampling on the Ising model on a
graph with bounded degree is known to mix in quasilinear time for high
temperatures~\cite[p. 201]{levin2009markov}.  \crcchange{Recent work has
outlined conditions under which Gibbs sampling of Markov Random Fields mixes
rapidly~\cite{NIPS2014_5315}.}  Continuous-valued Gibbs
sampling over models with exponential-family distributions is also
known to mix rapidly~\cite{diaconis2008,diaconis2010gibbs}. Each of
these celebrated
results still leaves a gap: there are many classes of factor graphs
on which Gibbs sampling seems to work very well---including as part of
systems that have won quality
competitions~\cite{surdeanu2014overview}---for which there are no
theoretical guarantees of rapid mixing.

Many graph algorithms that take exponential time in general can be
shown to run in polynomial time as long as some graph property is
bounded.  For inference on factor graphs, the most commonly used
property is hypertree width, which bounds the complexity of
\emph{dynamic programming} algorithms on the graph.  Many problems,
including variable elimination for exact inference, can be solved in
polynomial time on graphs with bounded hypertree
width~\cite[p. 1000]{koller2009probabilistic}.  In some sense, bounded
hypertree width is a necessary and sufficient condition for
tractability of inference in graphical models
\cite{kwisthout2010necessity,chandrasekaran2012complexity}.
Unfortunately, 
it is not hard to construct examples of factor graphs with bounded
weights and hypertree width $1$ for which Gibbs sampling takes
exponential time to mix.  Therefore, bounding hypertree width is
insufficient to ensure rapid mixing of Gibbs sampling. To analyze the
behavior of Gibbs sampling, we define a new graph property, called the
\emph{hierarchy width}.  This is a stronger condition than hypertree
width; the hierarchy width of a graph will always be larger than its
hypertree width.  We show that for graphs with bounded hierarchy width
and bounded weights, Gibbs sampling mixes rapidly.

Our interest in hierarchy width is motivated by so-called factor graph
templates, which are common in
practice~\cite[p. 213]{koller2009probabilistic}.  Several types of
models, such as Markov Logic Networks (MLN) and Relational Markov
Networks (RMN) can be represented as factor graph templates.  Many
state-of-the-art systems use Gibbs sampling on factor graph templates
and achieve better results than competitors using other
algorithms~\cite{mccallum2009factorie,AAAI1510030}.  We exhibit a
class of factor graph templates, called \emph{hierarchical templates},
which, when instantiated, have a hierarchy width that is bounded
independently of the dataset used; Gibbs sampling on models
instantiated from these factor graph templates will mix in polynomial
time. This is a kind of sampling analog to tractable Markov
logic~\cite{domingos2012tractable} or so-called ``safe plans'' in probabilistic
databases~\cite{suciu2011probabilistic}.
We exhibit a real-world templated program
\crcchange{that outperforms} human annotators at a complex text extraction
task---and provably mixes in polynomial time. 

In summary, this work makes the following contributions:
\begin{itemize}
  \item We introduce a new notion of width, \emph{hierarchy width}, 
    and show that Gibbs sampling mixes in polynomial time for all factor
    graphs with bounded hierarchy width and factor weight.
  \item We describe a new class of factor graph templates, \emph{hierarchical
    factor graph templates}, such that Gibbs sampling on instantiations of
    these templates mixes in polynomial time.
  \item We validate our results experimentally and exhibit factor
    graph templates that achieve high quality on tasks but for which
    our new theory is able to provide mixing time guarantees.
\end{itemize}

\subsection{Related Work}

Gibbs sampling is just one of several algorithms proposed for use in factor
graph inference.  The variable elimination algorithm
\cite{koller2009probabilistic} is an exact inference method that runs in
polynomial time for graphs of bounded hypertree width.  Belief propagation is 
another widely-used inference algorithm that produces an exact result for
trees and, although it does not converge in all cases, converges to a good
approximation under known conditions~\cite{ihler2005loopy}.
\emph{Lifted inference}
\cite{poole2003first} is one way to take advantage of the structural symmetry
of factor graphs that are instantiated from a template; there are lifted
versions of many common algorithms, such as variable elimination
\cite{ng2008probabilistic}, belief propagation \cite{singla2008lifted}, and
Gibbs sampling~\cite{liftedGibbs2012}.
It is also possible to leverage a template for fast computation:
~\citet{AAAI1510030} achieve orders of magnitude of speedup of Gibbs sampling
on MLNs.
Compared with Gibbs
sampling, these inference algorithms typically have better
theoretical results;
despite this, Gibbs sampling is a ubiquitous algorithm that performs
practically well---far outstripping its guarantees.

Our approach of characterizing runtime in terms of a graph property is
typical for the analysis of graph algorithms. 
Many algorithms are known to run in polynomial time on graphs of
bounded treewidth~\cite{robertson1986graph}, despite being otherwise NP-hard.
Sometimes, using a stronger or weaker property than treewidth will produce a
better result; for example,
the submodular width used for constraint satisfaction
problems~\cite{marx2013tractable}.

\section{Main Result}
In this section, we describe our main contribution.  We analyze some
simple example graphs, and use them to show that 
bounded hypertree width is not sufficient to guarantee rapid
mixing of Gibbs sampling.
Drawing intuition from this, we define the hierarchy width graph
property, and prove that Gibbs sampling mixes in polynomial time for graphs
with bounded hierarchy width.

First, we state some basic definitions.
A factor graph $G$ is a graphical model that consists of a set of variables $V$
and factors $\Phi$, and determines a distribution over those variables.
If $I$ is a \emph{world} for $G$ (an assignment of a
value to each variable in $V$), then $\epsilon$, the \emph{energy} of the
world, is defined as
\begin{equation}
  \label{eqnFactorGraphEnergy}
  \textstyle
  \epsilon(I)
  =
  \sum_{\phi \in \Phi} \phi(I).
\end{equation}
The probability of world $I$ is
$\pi(I) = \frac{1}{Z} \exp(\epsilon(I))$,
where $Z$ is the normalization constant necessary for this to be a 
distribution.
Typically, each $\phi$ depends only on a subset of the
variables; we can draw $G$ as a bipartite graph where a variable $v \in V$ is
connected to a factor $\phi \in \Phi$ if $\phi$ depends on $v$.
\begin{definition}[Mixing Time]
\crcchange{
The \emph{mixing time} of a Markov chain is the first time $t$ at
which the estimated distribution $\mu_t$ is within statistical distance
$\frac{1}{4}$
of the true distribution~\cite[p. 55]{levin2009markov}.  That is,}
\[
  \textstyle
  t_{\mathrm{mix}}
  =
  \min \left\{
    t
    : 
    \max_{A \subset \Omega} \Abs{\mu_t(A) - \pi(A)}
    \le
    \frac{1}{4}
  \right\}.
\]
\end{definition}
\subsection{Voting Example}
We start by considering a simple example model~\cite{wu2015incremental},
called the \emph{voting model}, that
models the sign of a particular ``query'' variable $Q \in \{ -1, 1 \}$ in the
presence of other ``voter'' variables
$T_i \in \{0, 1\}$ and $F_i \in \{0, 1\}$, for $i \in \{1, \ldots, n\}$,
that suggest that $Q$ is positive and negative (true and false), respectively.
We consider three versions of this model.  The first,
the \emph{voting model with linear semantics}, has energy function
\[
  \textstyle
  \epsilon(Q, T, F)
  =
  w Q \sum_{i=1}^n T_i
  -
  w Q \sum_{i=1}^n F_i
  +
  \sum_{i=1}^n w_{T_i} T_i
  +
  \sum_{i=1}^n w_{F_i} F_i,
\]
where \crcchange{$w_{T_i}$, $w_{F_i}$, and $w > 0$} are constant weights.
This model has a factor connecting each voter variable to the query, which
represents the value of that vote, and an additional factor that gives a prior
for each voter.  It corresponds to the factor graph in Figure
\ref{figVotingModelLinear}.
The second version, the \emph{voting model with logical semantics}, has
energy function
\[
  \textstyle
  \epsilon(Q, T, F)
  =
  w Q \max_{i} T_i
  -
  w Q \max_{i} F_i
  +
  \sum_{i=1}^n w_{T_i} T_i
  +
  \sum_{i=1}^n w_{F_i} F_i.
\]
Here, in addition to the prior factors, there are only two other factors, one
of which (which we call $\phi_T$)
connects all the true-voters to the query, and the other of which ($\phi_F$)
connects all the false-voters to the query.  The third version,
the \emph{voting model with ratio semantics}, is an intermediate between these
two models, and has energy function
\[
  \textstyle
  \epsilon(Q, T, F)
  =
  w Q \log\left(1 + \sum_{i=1}^n T_i \right)
  -
  w Q \log\left(1 + \sum_{i=1}^n F_i \right)
  +
  \sum_{i=1}^n w_{T_i} T_i
  +
  \sum_{i=1}^n w_{F_i} F_i.
\]
With either logical or ratio semantics, this model can be drawn as the
factor graph in Figure \ref{figVotingModelRatio}.

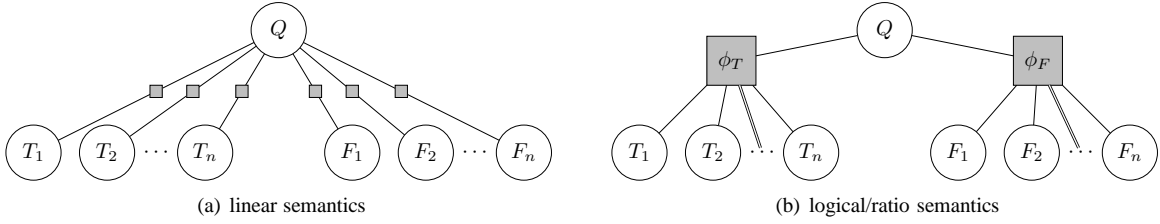
\begin{figure}[t]%
\centering
\subfigure[linear semantics]{%
\centering%
\resizebox{.44\textwidth}{!}{%
\begin{tikzpicture}[every node/.style={inner sep=0,outer sep=0}]
\draw (0,2) node[draw,circle,minimum size=0.9cm](q) {$Q$};
\draw (-4.0,0) node[draw,circle,minimum size=0.9cm](y1) {$T_1$};
\draw (-2.8,0) node[draw,circle,minimum size=0.9cm](y2) {$T_2$};
\draw (-2.0,0) node(ycd){$\cdots$};
\draw (-1.2,0) node[draw,circle,minimum size=0.9cm](y3) {$T_n$};
\draw (5.2-4.0,0) node[draw,circle,minimum size=0.9cm](n1) {$F_1$};
\draw (5.2-2.8,0) node[draw,circle,minimum size=0.9cm](n2) {$F_2$};
\draw (5.2-2.0,0) node(ncd){$\cdots$};
\draw (5.2-1.2,0) node[draw,circle,minimum size=0.9cm](n3) {$F_n$};
\draw (q) -- coordinate[midway](yf1) (y1);
\draw (q) -- coordinate[midway](yf2) (y2);
\draw (q) -- coordinate[midway](yf3) (y3);
\draw (q) -- coordinate[midway](nf1) (n1);
\draw (q) -- coordinate[midway](nf2) (n2);
\draw (q) -- coordinate[midway](nf3) (n3);
\draw (yf1) node[draw,fill=lightgray,rectangle,minimum size=0.2cm] {};
\draw (yf2) node[draw,fill=lightgray,rectangle,minimum size=0.2cm] {};
\draw (yf3) node[draw,fill=lightgray,rectangle,minimum size=0.2cm] {};
\draw (nf1) node[draw,fill=lightgray,rectangle,minimum size=0.2cm] {};
\draw (nf2) node[draw,fill=lightgray,rectangle,minimum size=0.2cm] {};
\draw (nf3) node[draw,fill=lightgray,rectangle,minimum size=0.2cm] {};
\end{tikzpicture}}
\label{figVotingModelLinear}}\qquad
\subfigure[logical/ratio semantics]{%
\centering%
\resizebox{.44\textwidth}{!}{%
\begin{tikzpicture}[every node/.style={inner sep=0,outer sep=0}]
\draw (0,2) node[draw,circle,minimum size=0.9cm](q) {$Q$};
\draw (-2.5,1.5) node[draw,fill=lightgray,rectangle,minimum size=0.8cm](py) {$\phi_T$};
\draw (2.5,1.5) node[draw,fill=lightgray,rectangle,minimum size=0.8cm](pn) {$\phi_F$};
\draw (-4.0,0) node[draw,circle,minimum size=0.9cm](y1) {$T_1$};
\draw (-2.8,0) node[draw,circle,minimum size=0.9cm](y2) {$T_2$};
\draw (-2.0,0) node(ycd){$\cdots$};
\draw (-1.2,0) node[draw,circle,minimum size=0.9cm](y3) {$T_n$};
\draw (5.2-4.0,0) node[draw,circle,minimum size=0.9cm](n1) {$F_1$};
\draw (5.2-2.8,0) node[draw,circle,minimum size=0.9cm](n2) {$F_2$};
\draw (5.2-2.0,0) node(ncd){$\cdots$};
\draw (5.2-1.2,0) node[draw,circle,minimum size=0.9cm](n3) {$F_n$};
\draw (q) -- (py);
\draw (q) -- (pn);
\draw (py) -- (y1);
\draw (py) -- (y2);
\draw (py) -- (y3);
\draw[double] (py) -- (ycd);
\draw (pn) -- (n1);
\draw (pn) -- (n2);
\draw (pn) -- (n3);
\draw[double] (pn) -- (ncd);
\end{tikzpicture}}
\label{figVotingModelRatio}}%
\caption{
Factor graph diagrams for the voting model; single-variable prior
factors are omitted.
}
\label{figVotingModel} 
\end{figure}


\crcchange{These three cases model different distributions and therefore
different ways of representing the power of a vote;
the choice of names is motivated by} considering the marginal
odds of $Q$ given the other variables.
For linear semantics, the odds of $Q$ depend \emph{linearly} on the difference
between
the number of nonzero positive-voters $T_i$ and nonzero negative-voters $F_i$.
For ratio semantics,
the odds of $Q$ depend roughly on their \emph{ratio}.  For logical semantics,
only the presence of nonzero voters matters, not the number of voters.


We instantiated this model with random weights $w_{T_i}$ and $w_{F_i}$, ran
Gibbs sampling on it, and computed the variance of 
the estimated marginal probability of $Q$ for the different models
(Figure \ref{figVotingPlot}).  The results show that the models with
logical and ratio semantics produce much lower-variance estimates than the
model with linear semantics.
This experiment motivates us to try to prove a bound on the mixing time of
Gibbs sampling on this model.


\begin{figure}[h]
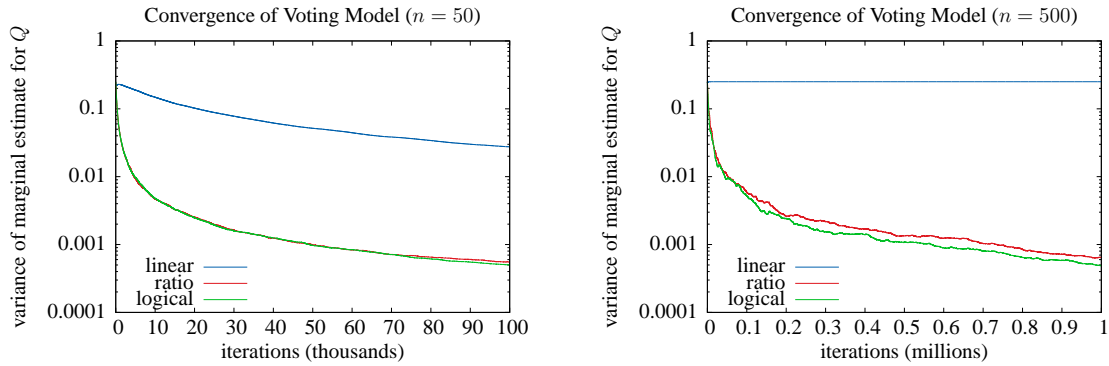
%
\centering
\subfigure{%
\centering%
\resizebox{!}{.30\textwidth}{
\Large \input{plotvoting.tex}%
}
\label{figVoting50Plot}}\qquad%
\subfigure{%
\centering%
\resizebox{!}{.30\textwidth}{
\Large \input{plotvoting500.tex}%
}
\label{figVoting500Plot}}\qquad%
\caption{
Convergence for the voting model with
$w = 0.5$, and random prior weights in $(-1, 0)$.
}
\label{figVotingPlot} 
\end{figure}

\begin{theorem}
\label{thmVotingBounds}
\crcchange{
Fix any constant $\omega > 0$, and
run Gibbs sampling on the voting model
with bounded factor weights
$\{w_{T_i}, w_{F_i}, w \} \subset [-\omega, \omega]$.}
For the voting model with linear semantics, the largest possible mixing
time $t_{\mathrm{mix}}$ of
any such model is $t_{\mathrm{mix}} = 2^\Theta(n)$.  For the voting model 
with either logical or
ratio semantics, the largest possible mixing time is
$t_{\mathrm{mix}} = \Theta(n \log n)$.
\end{theorem}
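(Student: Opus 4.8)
\emph{Overall strategy.} I would split each half of the statement into a lower bound and an upper bound, and I would treat ``largest possible mixing time'' as follows: for the upper bounds I must control \emph{every} admissible weight choice, whereas for the lower bounds it suffices to \emph{exhibit one} weight family (constant in $n$, inside $[-\omega,\omega]$) attaining the claimed rate. The exponential behaviour of the linear model comes from a conductance (bottleneck) bound, and the quasilinear behaviour of the logical/ratio models comes from pairing a coupon-collector lower bound against a coupling upper bound, the latter exploiting that these semantics only couple $Q$ to the voters through a saturating aggregate ($\max$ or $\log$).

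\emph{Linear semantics.} For the lower bound I would fix $w=\omega$ and $w_{T_i}=w_{F_i}=0$ and apply the bottleneck-ratio bound $t_{\mathrm{mix}}\ge \tfrac{1}{4\Phi_*}$~\cite{levin2009markov} with $S=\{Q=+1\}$. By the symmetry $(Q,T,F)\mapsto(-Q,F,T)$ we get $\pi(S)=\tfrac12$. The only moves leaving $S$ flip $Q$, and writing $M=\sum_i T_i-\sum_i F_i$ one checks $\pi(+1,T,F)\,P(Q{:}+1\to-1\mid T,F)=\tfrac{1}{Z}\bigl(e^{wM}+e^{-wM}\bigr)^{-1}$, so that
\[
  \Phi(S)=\frac{1}{2n+1}\cdot\frac{\sum_{T,F}\bigl(e^{wM}+e^{-wM}\bigr)^{-1}}{\sum_{T,F}e^{wM}}
  \le \frac{1}{2(2n+1)}\left(\frac{2}{1+\cosh w}\right)^{n},
\]
using $\bigl(e^{wM}+e^{-wM}\bigr)^{-1}\le\tfrac12$ in the numerator and $\sum_{T,F}e^{wM}=(2+2\cosh w)^n$ in the denominator. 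Since $\tfrac{2}{1+\cosh\omega}<1$ is a constant, $\Phi_*=2^{-\Omega(n)}$ and hence $t_{\mathrm{mix}}=2^{\Omega(n)}$. The matching $2^{O(n)}$ upper bound is generic: on the $\le 2^{2n+1}$ states the chain is irreducible with minimum stationary mass and minimum nonzero transition probability both at least $e^{-O(\omega n)}/\mathrm{poly}(n)$, so a crude Cheeger-type bound gives $t_{\mathrm{mix}}=2^{O(n)}$.

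\emph{Logical and ratio semantics.} The lower bound $\Omega(n\log n)$ is coupon collecting: for single-site dynamics on $2n+1$ coordinates, until a coordinate is first resampled it retains its initial value, so a worst-case start stays statistical distance $>\tfrac14$ from $\pi$ for $\Omega(n\log n)$ steps, independent of the weights. For the upper bound I would run the standard grand coupling (both chains pick the same coordinate and the same uniform, updating by inverse-CDF). The decisive structural fact is that in both models the conditional of $Q$ has \emph{bounded} logit $w(\max_i T_i-\max_i F_i)$ (logical) or $w\log\frac{1+\sum T_i}{1+\sum F_i}$ (ratio), and that once at least two voters are active on each side, flipping any single voter changes neither $\max$ (logical) nor, for ratio, the aggregate by more than $O(1/n)$. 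Thus in this ``bulk'' every voter's conditional reduces to the $Q$-independent logit $w_{T_i}$ (logical, exactly) or $w_{T_i}+O(1/n)$ (ratio), so the voters become essentially independent Bernoullis and each coordinate coalesces on its first in-bulk update: coalescence is then again coupon collecting, giving $O(n\log n)$.

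\emph{The main obstacle.} The one place the coupling does not contract is the boundary where almost all voters on a side are off: there the $\max$/$\log$ makes each voter feel $Q$ at \emph{constant} strength, so from a state where the two chains disagree on $Q$ with all $T_i=0$, a single voter update creates a disagreement with constant probability while $Q$ itself coalesces only at rate $1/(2n+1)$ --- naive (even weighted) Hamming path coupling expands here. I expect this to be the crux of the proof. My plan is to neutralize it by observing that $\{\sum_i T_i\le 1\}$ and $\{\sum_i F_i\le 1\}$ have stationary mass $e^{-\Omega(n)}$ (a sum of $n$ Bernoullis with constant means lands at $\le 1$ only exponentially rarely) and are exited in $O(n\log n)$ steps, and then running the coalescence argument on a metric restricted to the bulk (equivalently, a two-phase burn-in-then-couple argument), so that the rare boundary excursions contribute negligibly to the coalescence time. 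For ratio semantics one additionally carries the residual $O(1/n)$ coupling through the contraction estimate, where summed over the $n$ voter updates it contributes only an $O(1)$ factor and does not affect the $O(n\log n)$ rate.
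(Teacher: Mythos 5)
Your proposal is correct in outline and, for three of the four bounds, follows essentially the same route as the paper: the $\Omega(n\log n)$ lower bound is the same coupon-collector observation; the $2^{O(n)}$ upper bound for linear semantics is the same ``every update couples with probability bounded away from zero'' argument (the paper gets $\exp(-O(n))$ coupling probability per $O(n\log n)$-step epoch and amplifies); and your logical/ratio upper bound is structurally identical to the paper's proof of its Statement 5 --- a coordinate-wise (correlated-flip) coupling, a burn-in phase, a Hoeffding-plus-union-bound argument that the voter counts stay at $\Omega(n)$ for the duration (the paper's event $E_2$ with threshold $p_M n/2$), and then coalescence of each coordinate at its last update, with the $Q$-coordinate coupling with probability $\Omega(1)$ and the $2n$ voters each failing with probability $0$ (logical) or $O(1/n)$ (ratio) so the product is $\Omega(1)$. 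You correctly identified the boundary region as the crux; the paper neutralizes it exactly as you propose, by conditioning on the good event rather than by a restricted path-coupling metric, which amounts to the same thing. The one genuinely different ingredient is your linear-semantics lower bound: you use the bottleneck-ratio inequality $t_{\mathrm{mix}} \ge 1/(4\Phi_*)$ with the cut $S = \{Q = +1\}$, and your computation ($\pi(S)=\tfrac12$ by the $(Q,T,F)\mapsto(-Q,F,T)$ symmetry, edge measure bounded by $\tfrac12 (2/(1+\cosh w))^n$ up to polynomial factors) is correct. The paper instead argues directly from the dynamics: starting from $Q=1$, all $T_i=1$, all $F_i=0$, it uses Hoeffding to show the vote margin stays $\Omega(n)$ at every step, so $Q$ flips with probability $\exp(-\Omega(n))$ per step and the chain cannot reach the half of the space with $Q=-1$ in subexponential time. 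The conductance route is arguably cleaner since it avoids tracking the trajectory and immediately certifies the bottleneck against \emph{every} starting distribution, while the paper's argument is more elementary. One small repair to make in your write-up: for ratio semantics the bulk must be $\{\sum_i T_i \ge cn\}$ (and likewise for $F$), not $\{\sum_i T_i \ge 2\}$ --- with only two active voters the increment $\log(1+u+1)-\log(1+u)$ is a constant, not $O(1/n)$; the same Hoeffding bound covers the larger boundary set, so nothing else changes.
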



This result validates our observation that linear semantics mix poorly
compared to logical and ratio semantics.  Intuitively, the reason why linear
semantics performs worse is that the Gibbs sampler will switch
the state of $Q$ only very infrequently---in fact exponentially so.
This is because the energy roughly depends linearly on the number of voters
$n$, and therefore the probability of switching $Q$ depends exponentially on
$n$.  This does not happen in either the logical or ratio models.

\subsection{Hypertree Width}
\label{ssHypertreeWidth}
In this section, we describe the commonly-used graph property of hypertree
width, and show using the voting example that bounding it is insufficient to
ensure rapid Gibbs sampling.  
Hypertree width is typically used to bound the complexity of
dynamic programming algorithms on a graph; in particular,
variable elimination for
exact inference runs in polynomial time on factor graphs with bounded
hypertree width~\cite[p. 1000]{koller2009probabilistic}.
The hypertree width of a hypergraph, which we
denote $\mathsf{tw}(G)$, is a
generalization of the notion of acyclicity; since the definition of hypertree
width is
technical, we instead state the definition of an acyclic hypergraph, which
is sufficient for our analysis.  
In order to apply these notions to factor graphs, we can represent a factor
graph as a hypergraph that has one vertex for each node of the factor graph,
and one hyperedge for each factor, where that hyperedge contains all
variables the factor depends on.

\begin{definition}[Acyclic Factor Graph~\cite{gottlob2014treewidth}]
A \emph{join tree}, also called a junction tree,
of a factor graph $G$ is a tree $T$ such that the nodes of $T$ are the factors
of $G$ and, if two factors $\phi$ and $\rho$ both depend on the same
variable $x$ in $G$, then every factor on the unique path between $\phi$ 
and $\rho$ in $T$ also depends on $x$.
A factor graph is \emph{acyclic} if it has a join tree.  All acyclic graphs
have hypertree width $\mathsf{tw}(G) = 1$.
\end{definition}
Note that all trees are acyclic; in particular the voting model (with any
semantics) has hypertree width $1$.  Since the voting model with
linear semantics and bounded weights mixes in exponential time
(Theorem \ref{thmVotingBounds}), this means that bounding the hypertree width
and the factor weights is insufficient to ensure rapid mixing of Gibbs
sampling.

\subsection{Hierarchy Width}
\label{secHierarchyWidth}
Since the hypertree width is insufficient, we define a new graph
property, the \emph{hierarchy width}, which, when bounded, ensures rapid mixing
of Gibbs sampling.  This result is our main contribution.

\begin{definition}[Hierarchy Width]
The hierarchy width $\mathsf{hw}(G)$ of a factor graph $G$ is defined
recursively such that, for any \emph{connected}
factor graph $G = \langle V, \Phi \rangle$,
\begin{equation}
  \label{eqnHWConnected}
  \mathsf{hw}(G)
  =
  1 + \min_{\phi^* \in \Phi}
  \mathsf{hw}(\langle V, \Phi - \{ \phi^* \}\rangle),
\end{equation}
and for any \emph{disconnected} factor graph $G$ with connected components
$G_1, G_2, \ldots$,
\begin{equation}
  \label{eqnHWDisconnected}
  \mathsf{hw}(G) = \max_i \mathsf{hw}(G_i).
\end{equation}
As a \emph{base case}, all factor graphs $G$ with no factors have
\begin{equation}
  \label{eqnHWBase}
  \mathsf{hw}(\langle V, \emptyset \rangle) = 0.
\end{equation}
\end{definition}

To develop some intuition about how to use the definition of
hierarchy width, we derive the hierarchy width of
the path graph drawn in Figure \ref{figPathGraphDiag}.

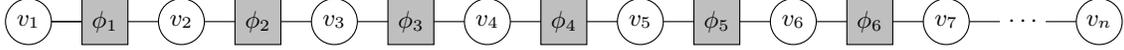
\begin{figure}[h]%
\centering
\resizebox{.90\textwidth}{!}{%
\begin{tikzpicture}[every node/.style={inner sep=0,outer sep=0}]
\draw (0.0,0) node[draw,circle,minimum size=0.6cm] (v1) {\small $v_1$};
\draw (1.0,0) node[draw,fill=lightgray,rectangle,minimum size=0.6cm] (f1) {\small $\phi_1$};
\draw (2.0,0) node[draw,circle,minimum size=0.6cm] (v2) {\small $v_2$};
\draw (3.0,0) node[draw,fill=lightgray,rectangle,minimum size=0.6cm] (f2) {\small $\phi_2$};
\draw (4.0,0) node[draw,circle,minimum size=0.6cm] (v3) {\small $v_3$};
\draw (5.0,0) node[draw,fill=lightgray,rectangle,minimum size=0.6cm] (f3) {\small $\phi_3$};
\draw (6.0,0) node[draw,circle,minimum size=0.6cm] (v4) {\small $v_4$};
\draw (7.0,0) node[draw,fill=lightgray,rectangle,minimum size=0.6cm] (f4) {\small $\phi_4$};
\draw (8.0,0) node[draw,circle,minimum size=0.6cm] (v5) {\small $v_5$};
\draw (9.0,0) node[draw,fill=lightgray,rectangle,minimum size=0.6cm] (f5) {\small $\phi_5$};
\draw (10.0,0) node[draw,circle,minimum size=0.6cm] (v6) {\small $v_6$};
\draw (11.0,0) node[draw,fill=lightgray,rectangle,minimum size=0.6cm] (f6) {\small $\phi_6$};
\draw (12.0,0) node[draw,circle,minimum size=0.6cm] (v7) {\small $v_7$};
\draw (13.0,0) node[minimum size=0.6cm] (f7) {\small $\cdots$};
\draw (14.0,0) node[draw,circle,minimum size=0.6cm] (v8) {\small $v_n$};
\draw (f1) -- (v1);
\draw (v1) -- (f1);
\draw (f1) -- (v2);
\draw (v2) -- (f2);
\draw (f2) -- (v3);
\draw (v3) -- (f3);
\draw (f3) -- (v4);
\draw (v4) -- (f4);
\draw (f4) -- (v5);
\draw (v5) -- (f5);
\draw (f5) -- (v6);
\draw (v6) -- (f6);
\draw (f6) -- (v7);
\draw (v7) -- (f7);
\draw (f7) -- (v8);
\end{tikzpicture}}
\caption{
Factor graph diagram for an $n$-variable path graph.
}
\label{figPathGraphDiag} 
\end{figure}

\begin{lemma}
The path graph model has hierarchy width
$\mathsf{hw}(G) = \lceil \log_2 n \rceil$.
\end{lemma}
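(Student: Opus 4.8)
The plan is to reduce the recursive definition of hierarchy width to a single numerical recurrence in the path length and then solve it by induction. Write $P_m$ for the $m$-node path factor graph and set $h(m) = \mathsf{hw}(P_m)$. The factors $\phi_1, \ldots, \phi_{n-1}$ each join only the consecutive variables $v_k$ and $v_{k+1}$, so deleting a single factor $\phi_k$ from the connected graph $P_n$ leaves the two disjoint subpaths on $\{v_1, \ldots, v_k\}$ and $\{v_{k+1}, \ldots, v_n\}$, which are exactly $P_k$ and $P_{n-k}$ (and contain no isolated vertices). Feeding this into the connected rule~\eqref{eqnHWConnected} followed by the disconnected rule~\eqref{eqnHWDisconnected} gives, for $n \ge 2$,
\[
  h(n) = 1 + \min_{1 \le k \le n-1} \max\bigl( h(k),\, h(n-k) \bigr),
\]
with base case $h(1) = 0$ coming from the empty-factor rule~\eqref{eqnHWBase}. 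The whole claim then amounts to showing this recurrence is solved by $h(n) = \lceil \log_2 n \rceil$.

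I would prove $h(n) = \lceil \log_2 n \rceil$ by strong induction on $n$, the case $n = 1$ being immediate. For the inductive step, assume $h(m) = \lceil \log_2 m \rceil$ for all $m < n$; note that $k$ and $n-k$ both lie strictly below $n$ whenever $1 \le k \le n-1$, so the hypothesis applies to every term on the right. The key arithmetic fact is the identity $\lceil \log_2 \lceil n/2 \rceil \rceil = \lceil \log_2 n \rceil - 1$ for all $n \ge 2$. For the \emph{upper bound}, I would take the balanced cut $k = \lfloor n/2 \rfloor$, so each piece has at most $\lceil n/2 \rceil$ nodes, and the identity yields $h(n) \le 1 + \lceil \log_2 \lceil n/2 \rceil \rceil = \lceil \log_2 n \rceil$. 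For the \emph{lower bound}, I would observe that every cut obeys $\max(k, n-k) \ge \lceil n/2 \rceil$; since $\lceil \log_2 \cdot \rceil$ is nondecreasing, the inductive hypothesis gives $\max(h(k), h(n-k)) = \lceil \log_2 \max(k, n-k) \rceil \ge \lceil \log_2 \lceil n/2 \rceil \rceil = \lceil \log_2 n \rceil - 1$, uniformly in $k$, so the minimum over cuts is still at least this value and $h(n) \ge \lceil \log_2 n \rceil$. The two bounds close the induction.

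The arithmetic identity is proved by writing $m = \lceil \log_2 n \rceil$, so that $2^{m-1} < n \le 2^m$, and checking that these inequalities force $2^{m-2} < \lceil n/2 \rceil \le 2^{m-1}$, whence $\lceil \log_2 \lceil n/2 \rceil \rceil = m - 1$. The main obstacle is the lower bound, where one must rule out the possibility that some highly unbalanced first cut $\phi^*$ could be cheaper than the balanced one; this is precisely where \emph{monotonicity} of $h$ in the path length is used. The mild but pleasant point is that no separate monotonicity lemma is needed: because the strong inductive hypothesis pins down $h(m) = \lceil \log_2 m \rceil$ exactly for all $m < n$, and $\lceil \log_2 \cdot \rceil$ is manifestly nondecreasing, monotonicity is inherited for free, and the entire argument collapses to the min/max bookkeeping in the recurrence together with the one ceiling-log identity.
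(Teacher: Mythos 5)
Your proof is correct and follows essentially the same route as the paper's: delete one factor to split the path, reduce to the recurrence $h(n) = 1 + \min_k \max(h(k), h(n-k))$, and solve it by induction using the balanced (middle) cut. The only difference is one of rigor --- the paper merely asserts that the middle factor is optimal (``it turns out that the factor that minimizes this expression is the factor in the middle''), whereas your lower-bound argument via $\max(k, n-k) \ge \lceil n/2 \rceil$ and monotonicity of $\lceil \log_2 \cdot \rceil$ actually justifies that claim.
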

\begin{proof}
Let $G_n$ denote the path graph with $n$ variables.  For $n = 1$, the
lemma follows from (\ref{eqnHWBase}). For $n > 1$, $G_n$ is connected, so
we must compute its hierarchy width by applying
(\ref{eqnHWConnected}).  It turns out that the factor that minimizes this
expression is the factor in the middle, and so applying
(\ref{eqnHWConnected}) followed by (\ref{eqnHWDisconnected}) shows that
$\mathsf{hw}(G_n) = 1 + \mathsf{hw(G_{\lceil \frac{n}{2} \rceil})}$.
Applying this inductively proves the lemma.
\end{proof}

Similarly, we are able to compute the hierarchy
width of the voting model factor graphs.

\begin{lemma}
The voting model with logical or ratio semantics has hierarchy width
$\mathsf{hw}(G) = 3$.
\end{lemma}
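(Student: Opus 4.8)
The plan is to prove matching bounds $\mathsf{hw}(G) \le 3$ and $\mathsf{hw}(G) \ge 3$ by directly unwinding the recursive definition. First I would fix notation for the \emph{full} factor graph. Besides the two aggregating factors $\phi_T$ (on $Q$ and all of $T_1, \ldots, T_n$) and $\phi_F$ (on $Q$ and all of $F_1, \ldots, F_n$), the model also carries the single-variable prior factors $\psi_{T_i}$ and $\psi_{F_i}$ that are suppressed in Figure \ref{figVotingModelRatio} but appear as the terms $w_{T_i} T_i$ and $w_{F_i} F_i$ in the energy. These priors are exactly what pushes the width from $2$ up to $3$, so it is essential to retain them; I will assume $n \ge 1$ throughout.

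For the upper bound I would exhibit a good removal order. Since $G$ is connected, apply \eqref{eqnHWConnected} with $\phi^* = \phi_T$. Removing $\phi_T$ disconnects the graph into (i) the star component $A$ containing $Q$, $\phi_F$, all $F_i$, and their priors, and (ii) the $n$ two-element components $\{T_i, \psi_{T_i}\}$, each of which has hierarchy width $1$ by one application of \eqref{eqnHWConnected} followed by the base case \eqref{eqnHWBase}. Within $A$, removing $\phi_F$ isolates $Q$ (width $0$) and leaves the components $\{F_i, \psi_{F_i}\}$ (width $1$ each), so \eqref{eqnHWDisconnected} gives $\mathsf{hw}(A) \le 1 + 1 = 2$. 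Combining the components via \eqref{eqnHWDisconnected} yields $\mathsf{hw}(G - \phi_T) = \max(2, 1) = 2$, and hence $\mathsf{hw}(G) \le 3$.

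For the lower bound I would first isolate a small structural fact: a factor graph $H$ satisfies $\mathsf{hw}(H) \ge 2$ whenever some connected component of $H$ contains at least two factors. This holds because any graph with a factor has width at least $1$ (the component carrying that factor is connected and nonempty), and removing a single factor from a component with two or more factors still leaves a factor behind; applying \eqref{eqnHWConnected} to that component then gives width at least $1 + 1 = 2$. With this in hand I would check that for \emph{every} possible first removal $\phi^*$ the graph $G - \phi^*$ still has a component with two or more factors: removing a prior leaves $\phi_T$ and $\phi_F$ joined through $Q$ in one component, while removing $\phi_T$ (resp.\ $\phi_F$) leaves $\phi_F$ together with the priors $\psi_{F_i}$ (resp.\ $\phi_T$ with the $\psi_{T_i}$). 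Thus $\mathsf{hw}(G - \phi^*) \ge 2$ for all $\phi^*$, so \eqref{eqnHWConnected} forces $\mathsf{hw}(G) \ge 3$, matching the upper bound.

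The main obstacle I anticipate is bookkeeping rather than depth: after each removal one must track how the vertex set splits into connected components and confirm that the isolated variables contribute width $0$, so they never dominate the $\max$ in \eqref{eqnHWDisconnected}. The one genuinely substantive point is the lower bound, where it would be tempting but incorrect to argue only from the "obvious" removals of $\phi_T$ and $\phi_F$; the component-counting fact is precisely what rules out every prior removal uniformly and pins the value down to exactly $3$.
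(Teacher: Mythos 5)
Your proof is correct. The paper actually states this lemma without giving a proof (neither in the body nor in the appendix), so there is nothing to compare against line by line; on its own terms your argument is sound. The two key points you get right are exactly the ones that matter: (i) the single-variable prior factors $\psi_{T_i}$, $\psi_{F_i}$ must be counted even though they are omitted from Figure \ref{figVotingModelRatio}, since without them the recursion would bottom out at width $2$ rather than $3$ (and their inclusion is confirmed by the companion lemma giving $2n+1$ for linear semantics); and (ii) the lower bound requires ruling out \emph{every} first removal, not just $\phi_T$ and $\phi_F$, which your observation that any component containing two or more factors has hierarchy width at least $2$ handles cleanly and uniformly. The bookkeeping in the upper bound (isolated $T_i$--$\psi_{T_i}$ pairs contributing width $1$, the star on $Q$, $\phi_F$, and the $F$-side contributing width $2$) is also correct for $n \ge 1$.
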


\begin{lemma}
The voting model with linear semantics has hierarchy width
$\mathsf{hw}(G) = 2n + 1$.
\end{lemma}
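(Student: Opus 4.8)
The plan is to read off the factor-graph structure explicitly and then set up an induction that tracks enough state to make the recursion self-contained. In the linear model the factors are exactly: one binary ``vote'' factor joining $Q$ to each leaf $T_i$ and each leaf $F_i$ (the $2n$ gray squares of Figure~\ref{figVotingModelLinear}), together with one unary ``prior'' factor on each of the $2n$ leaves; crucially $Q$ itself carries no unary factor. Thus $G$ is a star centered at $Q$ with $2n$ \emph{full} pendants, a full pendant being a leaf carrying both a vote factor to $Q$ and its own prior. I would denote by $H_{a,b}$ the star centered at $Q$ with $a$ full pendants and $b$ \emph{bare} pendants (a bare pendant being a leaf joined to $Q$ by a vote factor but carrying no prior), so that the target graph is $G = H_{2n,0}$. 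The point of admitting bare pendants is that deleting a prior factor turns a full pendant into a bare one, so the family $\{H_{a,b}\}$ is closed under the deletions that the recursion (\ref{eqnHWConnected}) performs along the branch containing $Q$; every branch that splits off is a single leaf with its prior ($\{\ell,\psi\}$, hierarchy width $1$) or without it ($\{\ell\}$, hierarchy width $0$), both trivial to evaluate.

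The heart of the argument is the closed form
\[
  \mathsf{hw}(H_{a,b}) = a + \max(1,b)\ \ (a \ge 1),
  \qquad
  \mathsf{hw}(H_{0,b}) = b,
\]
which I would prove by induction on the number of pendants $a+b$ using (\ref{eqnHWConnected})--(\ref{eqnHWBase}). For the upper bound it suffices to exhibit one good deletion: removing the vote factor of a full pendant splits off a hierarchy-width-$1$ component $\{\ell,\psi\}$ and leaves $H_{a-1,b}$, giving $\mathsf{hw}(H_{a,b}) \le 1 + \max(1,\mathsf{hw}(H_{a-1,b}))$, which the inductive hypothesis collapses to $a+\max(1,b)$. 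Specializing to $H_{2n,0}$ yields $\mathsf{hw}(G) \le 2n+1$, matching the intuition that each of the $2n$ vote factors must be peeled off and that the surrounding priors force exactly one additional level.

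For the lower bound I would argue that from $H_{a,b}$ every admissible deletion leads to a recursion value of at least $a+\max(1,b)$. There are three kinds of deletion to examine: deleting a full pendant's vote factor gives $1+\max(1,\mathsf{hw}(H_{a-1,b}))$, deleting a bare pendant's vote factor gives $1+\mathsf{hw}(H_{a,b-1})$, and deleting a prior gives $1+\mathsf{hw}(H_{a-1,b+1})$. Substituting the inductive formula reduces each option to a one-line numerical inequality; the only delicate case is the last, where one must verify $1+\mathsf{hw}(H_{a-1,b+1}) \ge a+\max(1,b)$, i.e.\ that trading a prior for an extra bare pendant never lowers the width. This is exactly the step that pins down the ``$+1$'' arising because every leaf of $G$ carries a prior. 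I expect this lower-bound case analysis, rather than the upper bound, to be the main obstacle, since it must simultaneously rule out all re-orderings of deletions; the two-parameter invariant $H_{a,b}$ is precisely what makes the induction go through, because without distinguishing full from bare pendants the minimizing deletion cannot be localized.
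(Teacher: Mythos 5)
Your proposal is correct. The paper does not actually include a proof of this lemma (it is stated without proof in Section~\ref{secHierarchyWidth}), so there is nothing to compare against; on its own merits your argument goes through. I checked the closed form $\mathsf{hw}(H_{a,b}) = a + \max(1,b)$ for $a \ge 1$ and $\mathsf{hw}(H_{0,b}) = b$ against the recursion: deleting a full pendant's vote factor gives $1 + \max\bigl(1, \mathsf{hw}(H_{a-1,b})\bigr)$, which equals $a + \max(1,b)$ in both the $a-1\ge 1$ and $a-1=0$ subcases; deleting a bare pendant's vote factor gives $1 + \mathsf{hw}(H_{a,b-1}) \ge a + \max(1,b)$; and deleting a prior gives $1 + \mathsf{hw}(H_{a-1,b+1})$, which evaluates to $a+b+1 \ge a+\max(1,b)$ (or $b+2 \ge 1+\max(1,b)$ when $a=1$). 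So the minimum in (\ref{eqnHWConnected}) is always attained by peeling a full pendant, and $\mathsf{hw}(H_{2n,0}) = 2n+1$ follows. Your reading of the factor structure ($2n$ binary vote factors plus $2n$ unary priors, no prior on $Q$) is consistent with how the paper must be counting factors, since the same convention reproduces $\mathsf{hw}=3$ for the logical/ratio graph. Two remarks: the lower bound $\mathsf{hw}(G) \ge 2n$ is already immediate from Statement~\ref{stmtHierarchyWidthDegree} applied to the degree of $Q$, so the genuine content of the lemma is the extra $+1$, which is exactly what your third deletion case (trading a prior for a bare pendant never helps) pins down; and your two-parameter family $H_{a,b}$ is precisely the right invariant to make the induction closed, since the naive single-parameter induction on the star with priors is not stable under the deletions the recursion performs.
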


These results are promising, since they separate our polynomially-mixing
examples from our exponentially-mixing examples.  However, the hierarchy
width of a factor graph says nothing about the factors themselves and the
functions they compute.  This means that it, alone, tells us nothing about
the model; for example, any distribution can be represented by a trivial
factor graph with a single factor that contains all the variables.
Therefore, in order to use hierarchy width to produce a result
about the mixing time of Gibbs sampling, we constrain the maximum weight of
the factors.


\begin{definition}[Maximum Factor Weight]
A factor graph has maximum factor weight $M$, where
\[
  M
  =
  \max_{\phi \in \Phi}
  \left(
    \max_I \phi(I)
    -
    \min_I \phi(I)
  \right).
\]
\end{definition}
For example, the maximum factor weight of the voting example with linear
semantics is $M = 2w$; with logical semantics, it is $M = 2w$; and with
ratio semantics, it is $M = 2w \log(n+1)$.  We now show that
graphs with bounded hierarchy width and maximum factor weight mix
rapidly.

\begin{restatable}[Polynomial Mixing Time]{theorem}{thmPolynomialMixingTime}
\label{thmPolynomialMixingTime}
If $G$ is a factor graph with $n$ variables, at most $s$ states per variable,
$e$ factors, maximum factor weight $M$, and hierarchy width $h$, then
\[
  t_{\mathrm{mix}}
  \le
  \left(
    \log(4)
    +
    n \log(s)
    +
    e M
  \right)
  n \exp(3hM).
\]
In particular, if $e$ is polynomial in $n$, the number
of values for each variable is bounded, and $hM = O(\log n)$, then
$t_{\mathrm{mix}}(\epsilon) = O(n^{O(1)})$.
\end{restatable}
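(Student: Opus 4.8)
The plan is to bound the mixing time through the spectral gap $\gamma$ of the Gibbs chain, using the standard reversible-chain estimate $t_{\mathrm{mix}} \le \gamma^{-1}\log\bigl(1/(\tfrac14 \pi_{\min})\bigr)$, and then to lower-bound $\gamma$ by induction on the hierarchy width. First I would control the $\log(1/\pi_{\min})$ term: since $\pi(I) = Z^{-1}\exp(\epsilon(I))$ with $Z \le s^n \exp(\max_I \epsilon(I))$ and $\max_I \epsilon(I) - \min_I \epsilon(I) \le eM$ (the energy is a sum of $e$ factors each of range at most $M$), I get $\log(1/\pi_{\min}) \le n\log s + eM$. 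Combined with the $\log 4$ from $\epsilon = \tfrac14$, this reproduces the bracketed prefactor, so the entire theorem reduces to establishing the spectral-gap bound $\gamma \ge n^{-1}\exp(-3hM)$.

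For the gap I would induct on the recursive structure in the definition of $\mathsf{hw}$. In the base case $\Phi = \emptyset$ the target $\pi$ is a product of uniform distributions and the random-scan Gibbs chain is the standard ``pick a coordinate, resample'' product chain, whose gap is exactly $1/n$, matching $n^{-1}\exp(0)$. For a disconnected $G$ with components $G_1,\dots,G_k$ on $n_1,\dots,n_k$ variables, $\pi$ factorizes and the full chain is $P = \sum_i \tfrac{n_i}{n}\tilde P_i$, a convex combination of the per-component chains acting on disjoint variable blocks; its gap equals $\min_i \tfrac{n_i}{n}\gamma_i$, so the inductive bound $\gamma_i \ge n_i^{-1}\exp(-3h_iM)$ together with $h_i \le h$ gives $\gamma \ge n^{-1}\exp(-3hM)$.

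The crux is the connected case, where $\mathsf{hw}(G) = 1 + \mathsf{hw}(G')$ for $G' = \langle V, \Phi - \{\phi^*\}\rangle$ of hierarchy width $h-1$; here I must compare the gap of the Gibbs chain on $G$ with that on $G'$, two reversible chains on the same state space but with different stationary laws $\pi,\pi'$ and different single-variable transition kernels. I would use a Dirichlet-form comparison: since $\phi^*$ has range at most $M$, the density ratio satisfies $e^{-M}\pi' \le \pi \le e^{M}\pi'$, which immediately gives the variance comparison $\mathbf{Var}_\pi(f)\le e^{M}\,\mathbf{Var}_{\pi'}(f)$. The harder estimate is the edge-conductance ratio $\pi(I)P_G(I,J)/\bigl(\pi'(I)P_{G'}(I,J)\bigr)$ for neighboring worlds $I,J$ differing in a single variable: writing both conductances in their symmetric detailed-balance form and substituting $\epsilon = \epsilon' + \phi^*$, the factor $\exp(\phi^*(I)+\phi^*(J))$, the ratio $Z'/Z$, and the ratio of the two local normalizers $\sum_{K}\exp(\epsilon'(K))$ against $\sum_K \exp(\epsilon(K))$ must each be bounded using only that $\phi^*$ varies by at most $M$; this yields a uniform lower bound of $e^{-2M}$ on the conductance ratio and hence $\mathcal{E}_G \ge e^{-2M}\mathcal{E}_{G'}$. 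Combining the Dirichlet-form factor $e^{-2M}$ with the variance factor $e^{M}$ gives $\gamma(G) \ge e^{-3M}\gamma(G')$, and feeding in the inductive hypothesis produces exactly $\gamma(G) \ge n^{-1}\exp(-3hM)$. I expect the bookkeeping in this conductance-ratio bound---keeping the three sources of $\phi^*$-dependence aligned so that the losses telescope into a clean $e^{-2M}$---to be the main obstacle; a minor additional point is ensuring that the chain's absolute spectral gap, rather than merely $1-\lambda_2$, controls mixing, which follows from the positive holding probability of each Gibbs update.
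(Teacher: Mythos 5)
Your proposal is correct and follows essentially the same route as the paper: the relaxation-time bound $t_{\mathrm{mix}} \le -\log(\tfrac14\pi_{\min})/\gamma$ with $\pi_{\min} \ge s^{-n}e^{-eM}$, plus an induction on the recursive definition of hierarchy width with the same three cases, where the connected case is handled by exactly the Dirichlet-form comparison you describe (density ratio $e^{\pm M}$, edge-conductance ratio $e^{\pm 2M}$, hence $\gamma(G)\ge e^{-3M}\gamma(G')$). The only point handled differently is the smallest eigenvalue: the paper runs a parallel comparison on the form $f^T(I+P)\Pi f$ to bound $1+\lambda_n$, whereas your appeal to ``positive holding probability'' as stated is not quite sufficient---the cleaner justification is that each single-site Gibbs update is a self-adjoint conditional-expectation projection in $L^2(\pi)$, so the random-scan average is positive semi-definite and $\lambda_n \ge 0$.
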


To show why bounding the hierarchy width is necessary for this result,
we outline the proof of Theorem \ref{thmPolynomialMixingTime}.
Our technique involves bounding the absolute spectral gap $\gamma(G)$ of the
transition matrix of Gibbs sampling on graph $G$; there are standard
results that use the absolute spectral gap to bound the mixing time of
a process~\cite[p. 155]{levin2009markov}.  Our proof proceeds via induction
using the definition of hierarchy width and the following three lemmas.

\begin{restatable}[Connected Case]{lemma}{lemmaRemoveOneFactor}
\label{lemmaRemoveOneFactor}
Let $G$ and $\bar G$ be two factor graphs with maximum factor weight $M$,
which differ only inasmuch as $G$ contains a single additional factor $\phi^*$.
Then,
\[
  \gamma(G) \ge \gamma(\bar G) \exp\left(-3M \right).
\]
\end{restatable}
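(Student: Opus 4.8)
The plan is to control the \emph{absolute} spectral gap $\gamma$ through the ordinary spectral gap and its Dirichlet-form (Rayleigh quotient) characterization, and then to compare the Dirichlet forms and the variance functionals of the two chains term by term. First I would record how the two stationary distributions relate. Writing $\pi$ and $\bar\pi$ for the stationary distributions of Gibbs sampling on $G$ and $\bar G$, equation~(\ref{eqnFactorGraphEnergy}) gives $\pi(I)\propto\bar\pi(I)\exp(\phi^*(I))$. Since adding a constant to $\phi^*$ changes neither $\pi$ nor the factor graph, and since $\max_I\phi^*(I)-\min_I\phi^*(I)\le M$, I may assume $0\le\phi^*(I)\le M$. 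Normalizing then yields, for every world $I$,
\[
  \frac{\pi(I)}{\bar\pi(I)}=\frac{\exp(\phi^*(I))}{\mathbf{E}_{\bar\pi}[\exp(\phi^*)]}\in[e^{-M},e^M],
\]
and the identical computation applied to the single-site conditionals gives $\pi(x_v\mid x_{-v})/\bar\pi(x_v\mid x_{-v})\in[e^{-M},e^M]$ as well; these conditionals are exactly the quantities the chain resamples.

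The key structural observation is that random-scan Gibbs is a \emph{positive semidefinite} operator: each single-variable resampling step $P_v$ is the conditional-expectation (orthogonal) projection onto functions not depending on $v$, so $P_v=P_v^2=P_v^*$ has spectrum $\{0,1\}$, and $P=\frac1n\sum_v P_v$ is an average of such projections, hence has all eigenvalues in $[0,1]$. Consequently the most negative eigenvalue is harmless, the absolute gap coincides with the ordinary gap, and I may use the variational formula
\[
  \gamma(G)=\min_{f}\frac{\mathcal{E}_G(f,f)}{\mathrm{Var}_\pi(f)},\qquad
  \mathcal{E}_G(f,f)=\tfrac12\sum_{x,y}\pi(x)P(x,y)(f(x)-f(y))^2,
\]
the minimum over nonconstant $f$. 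I would then prove the lemma by showing, for every such $f$, that $\mathcal{E}_G(f,f)/\mathrm{Var}_\pi(f)\ge e^{-3M}\,\mathcal{E}_{\bar G}(f,f)/\mathrm{Var}_{\bar\pi}(f)\ge e^{-3M}\gamma(\bar G)$, and taking the minimum over $f$ (legitimate since $\pi$ and $\bar\pi$ have the same full support, so $\mathrm{Var}_\pi(f)>0\iff\mathrm{Var}_{\bar\pi}(f)>0$).

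Two term-by-term comparisons supply the factor $e^{-3M}$. For the Dirichlet form, only single-coordinate moves contribute, and the weight on such a move is $\pi(x)\,\pi(y_v\mid x_{-v})$; bounding $\pi(x)\ge e^{-M}\bar\pi(x)$ and $\pi(y_v\mid x_{-v})\ge e^{-M}\bar\pi(y_v\mid x_{-v})$ by the ratios above, while the squared differences are identical for both graphs, gives $\mathcal{E}_G(f,f)\ge e^{-2M}\mathcal{E}_{\bar G}(f,f)$. For the variance I would use the variational identity $\mathrm{Var}_\pi(f)=\min_c \mathbf{E}_\pi[(f-c)^2]$: evaluating at $c=\mathbf{E}_{\bar\pi}[f]$ and using the \emph{pointwise} bound $\pi(x)\le e^M\bar\pi(x)$ gives $\mathrm{Var}_\pi(f)\le e^M\mathrm{Var}_{\bar\pi}(f)$. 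Dividing the two estimates produces exactly $e^{-3M}$.

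The main obstacle is obtaining the sharp constant $3M$ rather than the $4M$ a naive argument yields. Two refinements are essential: exploiting positive-semidefiniteness so that only the one-sided Rayleigh quotient must be bounded (no separate control of the smallest eigenvalue), and using the $\min_c$ form of the variance together with the pointwise ratio $\pi\le e^M\bar\pi$, which saves a factor $e^M$ over the crude pairwise bound $\pi(x)\pi(y)\le e^{2M}\bar\pi(x)\bar\pi(y)$. The remaining steps---verifying reversibility, checking that $\mathcal{E}_G$ restricts correctly to single-site moves, and confirming that $\pi,\bar\pi$ are everywhere positive---are routine.
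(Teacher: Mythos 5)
Your proposal is correct and its core is the same as the paper's: both proofs compare the Dirichlet forms term by term using the pointwise ratios $\pi/\bar\pi,\,P/\bar P\in[e^{-M},e^{M}]$ to get $\mathcal{E}_G(f,f)\ge e^{-2M}\mathcal{E}_{\bar G}(f,f)$, and both use the variational form of the variance (centering at $\mathbf{E}_{\bar\pi}[f]$, then applying $\pi\le e^{M}\bar\pi$ pointwise) to get $\mathrm{Var}_\pi(f)\le e^{M}\mathrm{Var}_{\bar\pi}(f)$, which is exactly where the $3M$ comes from in the paper as well. The one genuine difference is how the smallest eigenvalue is handled: the paper runs a second, parallel argument with the form $\beta(f)=f^{T}(I+P)\Pi f$ to show $1+\lambda_{\min}\ge e^{-3M}\bar\gamma$ and then takes $\gamma\ge\min(1-\lambda_2,\,1+\lambda_{\min})$, whereas you observe that each single-site update is the conditional-expectation projection $\mathbf{E}_\pi[\,\cdot\mid x_{-v}]$, so the random-scan kernel $P=\frac1n\sum_v P_v$ is positive semidefinite in $L^2(\pi)$ and the absolute gap already equals $1-\lambda_2$. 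That observation is correct and shortens the proof by eliminating the entire $\beta$ branch; the paper's version is more self-contained (it never needs to identify $P_v$ as a projection) and would survive verbatim for variants of the dynamics that are not PSD, but for the Gibbs sampler as defined here your route is cleaner and loses nothing.
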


\begin{restatable}[Disconnected Case]{lemma}{lemmaReduceIndependent}
\label{lemmaReduceIndependent}
Let $G$ be a disconnected factor graph with $n$ variables and $m$ connected
components $G_1, G_2, \ldots, G_m$ with $n_1, n_2, \ldots n_m$ variables,
respectively. Then,
\[
  \gamma(G)
  \ge
  \min_{i \le m} \frac{n_i}{n} \gamma(G_i).
\]
\end{restatable}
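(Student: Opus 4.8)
The plan is to work directly with the transition operator of Gibbs sampling viewed as a self-adjoint operator on $L^2(\Omega, \pi)$ and to exploit the tensor-product structure forced by disconnectedness. First I would observe that because $G$ splits into components $G_1, \ldots, G_m$ that share no factors, the stationary distribution factorizes as a product, $\pi = \pi_1 \otimes \cdots \otimes \pi_m$, so that $L^2(\Omega, \pi) = \bigotimes_{i=1}^m L^2(\Omega_i, \pi_i)$, where $\Omega_i$ is the set of worlds of component $i$. The single-site Gibbs update of a variable $v$ in component $i$ resamples $v$ from its conditional distribution, which---by disconnectedness---depends only on the factors of $G_i$ and hence only on the values of the variables in $G_i$. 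Consequently the update operator for $v$ acts as the identity on every tensor factor except the $i$-th.

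Next I would decompose the full transition operator. Gibbs sampling picks one of the $n$ variables uniformly and resamples it, so $P = \tfrac{1}{n}\sum_{v} P_v$; grouping the variables by component gives
\[
  P = \sum_{i=1}^m \frac{n_i}{n}\, P_i, \qquad P_i = \frac{1}{n_i}\sum_{v \in V_i} P_v,
\]
and by the previous paragraph each $P_i$ equals an operator $\tilde P_i$ acting on the $i$-th tensor factor together with the identity elsewhere, where $\tilde P_i$ is exactly the Gibbs transition operator of the standalone component $G_i$. Since distinct $P_i$ act on distinct tensor factors they commute and are simultaneously diagonalizable; a joint eigenbasis is given by tensor products $f_1 \otimes \cdots \otimes f_m$ of eigenfunctions of the $\tilde P_i$. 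The corresponding eigenvalue of $P$ is $\sum_{i} \tfrac{n_i}{n}\lambda^{(i)}$, where $\lambda^{(i)} \in \mathrm{spec}(\tilde P_i)$ is the eigenvalue attached to $f_i$.

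With the spectrum in hand, the gap bound is a short estimate. Every $\tilde P_i$ is reversible and ergodic, so its spectrum lies in $[-1,1]$ with a simple top eigenvalue $1$ (constant functions) and every other eigenvalue satisfying $|\lambda^{(i)}| \le 1 - \gamma(G_i)$. The trivial eigenvalue $1$ of $P$ arises only when every $\lambda^{(i)} = 1$; any other eigenvalue $\mu = \sum_i \tfrac{n_i}{n}\lambda^{(i)}$ has at least one index $j$ with $\lambda^{(j)} \neq 1$, so by the triangle inequality and $|\lambda^{(i)}| \le 1$,
\[
  |\mu| \le \frac{n_j}{n}\,|\lambda^{(j)}| + \sum_{i \neq j}\frac{n_i}{n} \le \frac{n_j}{n}\bigl(1 - \gamma(G_j)\bigr) + \Bigl(1 - \frac{n_j}{n}\Bigr) = 1 - \frac{n_j}{n}\gamma(G_j) \le 1 - \min_{i \le m}\frac{n_i}{n}\gamma(G_i).
\]
Taking the maximum of $|\mu|$ over all nontrivial eigenvalues and subtracting from $1$ yields $\gamma(G) \ge \min_{i} \tfrac{n_i}{n}\gamma(G_i)$, which is the claim.

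I expect the main obstacle to be the careful justification of the two structural facts that make the short estimate possible: that disconnectedness makes each single-site update act on a single tensor factor, so that the product eigenbasis and the formula $\mu = \sum_i \tfrac{n_i}{n}\lambda^{(i)}$ are valid; and that $\gamma(G)$ is genuinely the \emph{absolute} spectral gap, so that the most negative eigenvalues are also controlled by the bound $|\lambda^{(i)}| \le 1 - \gamma(G_i)$ rather than only the second-largest eigenvalue. Both hinge on reversibility of Gibbs sampling and on the product form of $\pi$; once these are nailed down, the convex-combination argument via the triangle inequality is routine.
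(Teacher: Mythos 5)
Your proposal is correct and follows essentially the same route as the paper: both identify the transition matrix as the weighted sum $\sum_i \tfrac{n_i}{n} P_i$ acting componentwise, diagonalize it via a product eigenbasis so that every eigenvalue is a convex combination $\sum_i \tfrac{n_i}{n}\lambda^{(i)}$, and then bound the absolute value of any nontrivial eigenvalue by isolating one index with $\lambda^{(j)} \neq 1$. The paper's version of the final estimate keeps the full sum $\sum_i \tfrac{n_i}{n}\bigl(1-\lvert\lambda_i\rvert\bigr)$ before dropping to a single term, but this is the same triangle-inequality argument you give.
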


\begin{restatable}[Base Case]{lemma}{lemmaNoFactors}
\label{lemmaNoFactors}
Let $G$ be a factor graph with one variable and no factors.  The absolute
spectral gap of Gibbs sampling running on $G$ will be
$\gamma(G) = 1$.
\end{restatable}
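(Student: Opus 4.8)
The plan is to treat this as the simplest possible base case and compute the transition matrix of Gibbs sampling directly. First I would observe that a factor graph with no factors has energy $\epsilon(I) = 0$ for every world $I$, since the energy in (\ref{eqnFactorGraphEnergy}) is an empty sum. Consequently the stationary distribution $\pi(I) = \frac{1}{Z}\exp(\epsilon(I))$ is uniform over the (at most $s$) states of the single variable.

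Next I would write down the Gibbs transition explicitly. With only one variable and nothing else in the model, each step of Gibbs sampling resamples that variable from its conditional distribution given the remaining variables; since there are no remaining variables, this conditional is exactly the marginal $\pi$. Hence, starting from any state, the next state is drawn from $\pi$ independently of the current state, so the transition matrix $P$ satisfies $P(x,y) = \pi(y)$ for all states $x, y$. In other words, $P = \mathbf{1}\pi^\top$ is a rank-one matrix all of whose rows equal $\pi$.

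From here the spectrum is immediate. The chain is reversible with respect to $\pi$, so its eigenvalues are real; $P$ has eigenvalue $1$ with eigenvector $\mathbf{1}$ (equivalently, $\pi$ is reached in a single step), and every other eigenvalue is $0$ because $P$ has rank one. The absolute spectral gap is $\gamma(G) = 1 - \lambda_*$, where $\lambda_*$ is the largest modulus among the non-unit eigenvalues; since those are all $0$, we obtain $\gamma(G) = 1 - 0 = 1$, as claimed.

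There is no genuine obstacle here—this lemma is deliberately the easy base case that anchors the induction for Theorem \ref{thmPolynomialMixingTime}, in which Lemma \ref{lemmaRemoveOneFactor} and Lemma \ref{lemmaReduceIndependent} supply the connected and disconnected inductive steps. The only points requiring a moment of care are (i) confirming that the convention for the \emph{absolute} spectral gap (one minus the second-largest eigenvalue \emph{modulus}) is not spoiled by a large-magnitude negative eigenvalue—here every non-unit eigenvalue is exactly $0$, so no such issue arises—and (ii) the degenerate case of a variable with a single state, in which $P$ is the $1 \times 1$ identity and $\gamma(G) = 1$ holds trivially.
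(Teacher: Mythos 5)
Your proof is correct and matches the paper's argument: the paper likewise observes that with no factors the single variable is resampled directly from its stationary distribution, so the transition matrix is the rank-one matrix $P = \pi \mathbf{1}^T$ with eigenvalues $0$ and $1$, giving $\gamma(G) = 1$. Your additional remarks (uniformity of $\pi$, the single-state degenerate case) are harmless extra detail but not needed.
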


Using these Lemmas inductively, it is
not hard to show that, under the conditions of Theorem
\ref{thmPolynomialMixingTime},
\[
  \gamma(G)
  \ge
  \frac{1}{n}
  \exp\left( -3 h M \right);
\]
converting this to a bound on the mixing time produces the result of
Theorem \ref{thmPolynomialMixingTime}.


To gain more intuition about the hierarchy width, we compare its properties
to
those of the hypertree width.
\crcchange{First, we note that, when the hierarchy width is
bounded, the hypertree width is also bounded.}

\begin{restatable}{statement}{stmtHierarchyWidthComparison}
\label{stmtHierarchyWidthComparison}
For any factor graph $G$, $\mathsf{tw}(G) \le \mathsf{hw}(G)$.
\end{restatable}
One of the useful properties of the hypertree width is that, for any fixed $k$,
computing whether a graph $G$ has hypertree width $\mathsf{tw}(G) \le k$ can
be done in polynomial time in the size of $G$.  We show the same is true for
the hierarchy width.
\begin{restatable}{statement}{stmtHierarchyWidthKPoly}
\label{stmtHierarchyWidthKPoly}
For any fixed $k$, computing whether $\mathsf{hw}(G) \le k$ can be done
in time polynomial in the number of factors of $G$.
\end{restatable}
Finally, we note that we can also bound the hierarchy width using the degree
of the factor graph.
Notice that a graph with unbounded node degree
contains the voting program with linear semantics as a subgraph.  This
statement shows that bounding the hierarchy width disallows such graphs.
\begin{restatable}{statement}{stmtHierarchyWidthDegree}
\label{stmtHierarchyWidthDegree}
\crcchange{Let $d$ be the maximum degree of a variable in factor graph $G$.
Then, $\mathsf{hw}(G) \ge d$.}
\end{restatable}

\section{Factor Graph Templates}
\label{ssFactorGraphTemplates}
Our study of hierarchy width is in part motivated by the desire to analyze
the behavior of Gibbs sampling on
factor graph templates, which are common in practice and used by many
state-of-the-art systems.  A factor graph template is an
abstract model that can be
\emph{instantiated} on a dataset to produce a factor graph.
The dataset consists of
\emph{objects}, each of which represents a thing we want to reason about, which
are divided into \emph{classes}.  For example, the object
$\mathsf{Bart}$ could have class $\mathsf{Person}$
and the object $\mathsf{Twilight}$ could have class $\mathsf{Movie}$.
(There are many ways to define templates; here, we follow the formulation
in \citet[p. 213]{koller2009probabilistic}.)

A factor graph template consists of a set of template variables and template
factors.  A template variable represents a property of a tuple of zero or more
objects of particular classes.  For example, we could have an
$\mathsf{IsPopular}(x)$ template, which takes a single argument of class
$\mathsf{Movie}$.  In the instantiated graph, this would take the form of 
multiple variables like $\mathsf{IsPopular}(\mathsf{Twilight})$ or
$\mathsf{IsPopular}(\mathsf{Avengers})$.  Template factors are replicated
similarly to produce multiple factors in the instantiated graph.  For example,
we can have a template factor
\begin{align*}
  \phi\left(\mathsf{TweetedAbout}(x, y), \mathsf{IsPopular}(x) \right)
\end{align*}
for some factor function $\phi$. This would be instantiated to factors like
\begin{align*}
  \phi\left(
    \mathsf{TweetedAbout}(\mathsf{Avengers}, \mathsf{Bart}), 
    \mathsf{IsPopular}(\mathsf{Avengers})
  \right).
\end{align*}
We call the $x$ and $y$ in a template factor \emph{object symbols}.
For an instantiated factor graph with template factors $\Phi$, 
if we let $A_{\phi}$ denote the set of possible assignments to the object
symbols in a template factor $\phi$, and let $\phi(a, I)$ denote the value of
its factor function in world $I$ under the object symbol assignment $a$, then
the standard way to define the energy function is with
\begin{equation}
  \label{eqnEnergyStandard}
  \textstyle
  \epsilon(I)
  =
  \sum_{\phi \in \Phi}
  \sum_{a \in A_{\phi}}
  w_\phi
  \phi(a, I),
\end{equation}
where $w_\phi$ is the weight of template factor $\phi$.
This energy function results from the creation of a single factor
$\phi_a(I) = \phi(a, I)$ for each object symbol assignment $a$ of $\phi$.
Unfortunately, this standard energy definition is not suitable for all
applications.  To deal with this, \citet{wu2015incremental} introduce
the notion of a \emph{semantic function} $g$, which counts the of energy of
instances of the factor template in a non-standard way.  In order to do this,
they first divide the object symbols of each template factor into two groups,
the \emph{head symbols} and the \emph{body symbols}.  When writing out
factor templates, we distinguish head symbols by writing them with a hat
(like $\hat x$).  If we let $H_\phi$
denote the set of possible assignments to the head symbols, let $B_\phi$ denote
the set of possible assignments to the body symbols, and let $\phi(h, b, I)$
denote the value of its factor function in world $I$ under the assignment
$(h, b)$, then the energy of a world is defined as
\begin{equation}
  \label{eqnEnergySemantic}
  \textstyle
  \epsilon(I)
  =
  \sum_{\phi \in \Phi}
  \sum_{h \in H_\phi}
  w_\phi(h)
  \>
  g\left(
    \sum_{b \in B_\phi}
    \phi(h, b, I)
  \right).
\end{equation}
This results in the creation of a single factor
$\phi_h(I) = g \left( \sum_b \phi(h, b, I) \right)$ for each assignment of
the template's head symbols.
We focus on three semantic functions in
particular~\cite{wu2015incremental}.
For the first, \emph{linear semantics},
$g(x) = x$.
This is identical to the standard semantics in (\ref{eqnEnergyStandard}).
For the second,
\emph{logical semantics}, $g(x) = \mathrm{sgn}(x)$.
For the third, \emph{ratio semantics},
$g(x) = \mathrm{sgn}(x) \log(1 + \Abs{x})$.  These semantics
are analogous to the different semantics used in our voting example.
\crcchange{\citet{wu2015incremental} exhibit several classification problems
where using logical or ratio semantics gives better F1 scores.}

\begin{figure}[t]%
\centering
\resizebox{.5\textwidth}{!}{%
\begin{tikzpicture}
\draw (-5,-2) rectangle (5,2) node[below left]{\it bounded factor weight};
\clip (-5.25,-2.25) rectangle (5.02,2.02);
\draw[rounded corners=1cm] (-7.5,-4.5) rectangle (4.5,1.5);
\draw (-0.5,1.5) node[below]{\it bounded hypertree width};
\fill (-4,-0.5) circle (0.05) node[above,style={align=center}] {voting\\(linear)};
\draw[rounded corners=0.8cm] (-3.0,-4.5) rectangle (4.0,1.0);
\draw (0.5,1.0) node[below]{\it polynomial mixing time};
\draw[rounded corners=0.6cm] (-2.5,-4.0) rectangle (3.5,0.5);
\draw (0.5,0.5) node[below]{\it bounded hierarchy width};
\draw[rounded corners=0.4cm] (-2.0,-1.75) rectangle (3.0,0.0);
\draw (0.5,0.0) node[below]{\it hierarchical templates};
\fill (-1,-1.4) circle (0.05) node[above,style={align=center}] {voting\\(logical)};
\fill (2,-1.4) circle (0.05) node[above,style={align=center}] {voting\\(ratio)};
\end{tikzpicture}}
\vspace{-5pt}
\caption{Subset relationships among classes of factor graphs, and locations of examples.}%
\label{figGraphClasses}%
\end{figure}
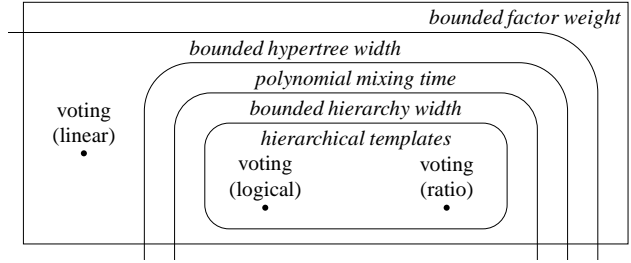

\subsection{Hierarchical Factor Graphs}
\label{ssHierarchicalFactorGraphs}
In this section, we outline a class of templates, hierarchical templates,
that have bounded hierarchy width.
\crcchange{We focus on models that
have hierarchical structure} in their template factors; for example,
\begin{equation}
  \label{eqnVeryHierarchical}
  \phi( A(\hat x, \hat y, z) , B(\hat x, \hat y) , Q(\hat x, \hat y) )
\end{equation}
should have hierarchical structure, while
\begin{equation}
  \label{eqnNotHierarchical}
  \phi( A(z) , B(\hat x) , Q(\hat x, y) )
\end{equation}
should not.  Armed with this intuition, we give the following definitions.

\begin{definition}[Hierarchy Depth]
A template factor $\phi$ has \emph{hierarchy depth} $d$ if the first $d$
object symbols
that appear in each of its terms are the same.  We call these symbols
\emph{hierarchical symbols}.  For example, (\ref{eqnVeryHierarchical})
has hierarchy depth $2$, and $\hat x$ and $\hat y$ are hierarchical symbols;
also, (\ref{eqnNotHierarchical}) has hierarchy depth $0$, and no hierarchical
symbols.
\end{definition}

\begin{definition}[Hierarchical]
We say that a template factor is \emph{hierarchical} if all of its
head symbols are hierarchical symbols.  For example,
(\ref{eqnVeryHierarchical}) is hierarchical, while
(\ref{eqnNotHierarchical}) is not.  We say that a factor graph template is
\emph{hierarchical} if all its template factors are hierarchical.
\end{definition}

We can explicitly bound the hierarchy width of instances of 
hierarchical factor graphs.
\begin{restatable}{lemma}{lemmaHierarchicalHW}
\label{lemmaHierarchicalHW}
If $G$ is an instance of a hierarchical template
with $E$ template factors, then $\mathsf{hw}(G) \le E$.
\end{restatable}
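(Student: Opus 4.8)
The plan is to induct on the total number $N$ of instantiated factors in $G$, proving the slightly stronger statement that $\mathsf{hw}(G)$ is at most the number of \emph{distinct} template factors that have at least one instance in $G$ (which is at most $E$). The base case $N = 0$ is immediate from (\ref{eqnHWBase}). For a disconnected $G$ I would invoke (\ref{eqnHWDisconnected}): each component has at most as many distinct template factors as $G$, and strictly fewer instantiated factors unless all factors already lie in a single component, in which case I pass to that connected component directly. This reduces everything to the connected case, which is where the real work happens.

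For a connected $G$ with $N \ge 1$, I would apply (\ref{eqnHWConnected}) by exhibiting one instantiated factor whose removal drops the distinct-template count by one. The key structural claim is: \emph{in any connected instance of a hierarchical template there is a template factor that has exactly one instance.} Granting this, let $\phi$ be such a template and $\phi^*$ its unique instance; then (\ref{eqnHWConnected}) gives $\mathsf{hw}(G) \le 1 + \mathsf{hw}(G - \phi^*)$, and since $\phi$ is now entirely absent, $G - \phi^*$ is an instance of a hierarchical template with one fewer distinct template factor and $N-1$ instantiated factors. The inductive hypothesis bounds $\mathsf{hw}(G - \phi^*)$ by the reduced count, and adding one recovers the bound for $G$.

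Two structural facts drive the central claim. First, two instances of the \emph{same} template factor are variable-disjoint: since every head symbol of a hierarchical factor is a hierarchical symbol, the head symbols occupy a common prefix of the argument list of every term (per (\ref{eqnEnergySemantic}), the head is held fixed while only the body is summed), so instances with different head assignments connect only to variables that disagree on a prefix coordinate and hence cannot coincide. Second, sharing a variable forces two factors to agree on the values of the hierarchical symbols they have in common, so along any path in a connected component the shallowest hierarchical-symbol values are propagated and pinned to a single value. Consequently the template factor sitting ``highest'' in this hierarchy---one all of whose head symbols are pinned by connectivity---admits only a single head assignment, hence a single instance.

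The main obstacle is making this last step fully rigorous when the hierarchical symbols are not literally shared across different relations: I would formalize the hierarchical prefixes as addresses on a common tree and argue that the set of head-symbol assignments realized within one connected component forms a chain under the prefix order, whose minimum element is pinned by connectivity and therefore unique. The variable-disjointness fact and the base and disconnected cases are routine; it is the existence of a single-instance template in every connected component that demands care. Notably, this is exactly the property that fails for the linear voting model---whose vote factor has a term ($Q$) with no object symbols and thus empty hierarchical prefix, making it non-hierarchical---which explains why that example escapes the $\mathsf{hw} \le E$ bound while the logical and ratio versions do not.
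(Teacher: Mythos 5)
Your proposal is correct in substance but takes a genuinely different route from the paper's. The paper proves Lemma~\ref{lemmaHierarchicalHW} globally: it builds an explicit hierarchy decomposition whose nodes are the prefixes of head-symbol assignments, labels each node with the instantiated factors whose head assignment is compatible with that prefix (at most one per template factor, which is where the bound $E$ comes from), and then invokes the equivalence between hierarchy width and minimum decomposition width (Statement~\ref{stmtHierarchyDecomp}). You instead work directly with the recursive definition (\ref{eqnHWConnected})--(\ref{eqnHWBase}), which forces you to supply the extra combinatorial fact that every connected component contains a template factor with a unique instance. That fact is true and provable along the lines you sketch: let $m^*$ be the minimum head-tuple length among template factors instantiated in the component; any two instances sharing a variable agree on the first $\min$ of their two head lengths, hence by connectivity all instances in the component agree on their first $m^*$ head coordinates, and a template factor of head length exactly $m^*$ therefore has a single instance there. (One quibble: the realized head assignments need not form a chain under the prefix order---heads longer than $m^*$ may branch---but only uniqueness of the length-$m^*$ prefix is needed, and that survives.) Note that both your argument and the paper's lean on the same unstated strengthening of the definition of ``hierarchical'': that the head symbols of all template factors are nested prefixes of one common symbol sequence occupying the leading argument positions of every term. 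The paper's parenthetical ``the rules \ldots must contain the same symbols in the same order'' is doing exactly the work your ``common tree of addresses'' would do, and without some such convention one can cook up two per-factor-hierarchical templates (e.g.\ one summing over rows and one over columns of a binary relation) whose instantiation is connected, has no uniquely instantiated template, and violates $\mathsf{hw}(G) \le E$. As for what each approach buys: the paper's decomposition is shorter once Statement~\ref{stmtHierarchyDecomp} is available; yours is self-contained, avoids the decomposition machinery entirely, and isolates precisely the structural property whose failure in the linear voting model explains why that example has large hierarchy width.
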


We would now like to use Theorem \ref{thmPolynomialMixingTime} to prove a bound
on the mixing time; this requires us to bound the maximum factor weight of
the graph.
Unfortunately, for linear semantics, the maximum factor weight of a graph
is potentially $O(n)$, so applying Theorem \ref{thmPolynomialMixingTime}
won't get us useful results.
Fortunately, for logical or ratio semantics, hierarchical factor graphs do
mix in polynomial time.
\begin{restatable}{statement}{stmtHierarchicalFG}
\label{stmtHierarchicalFG}
For any fixed hierarchical factor graph template $\mathcal{G}$,
if $G$ is an instance of $\mathcal{G}$
with bounded weights using either logical or ratio semantics, then the
mixing time of Gibbs sampling on $G$ is polynomial in the
number of objects $n$ in
its dataset.  That is, $t_{\mathrm{mix}} = O\left(n^{O(1)} \right)$.
\end{restatable}
So, if we want to construct models with Gibbs samplers that mix rapidly, one
way to do it is with hierarchical factor graph templates using logical
or ratio semantics.

\section{Experiments}


\begin{figure}[t]
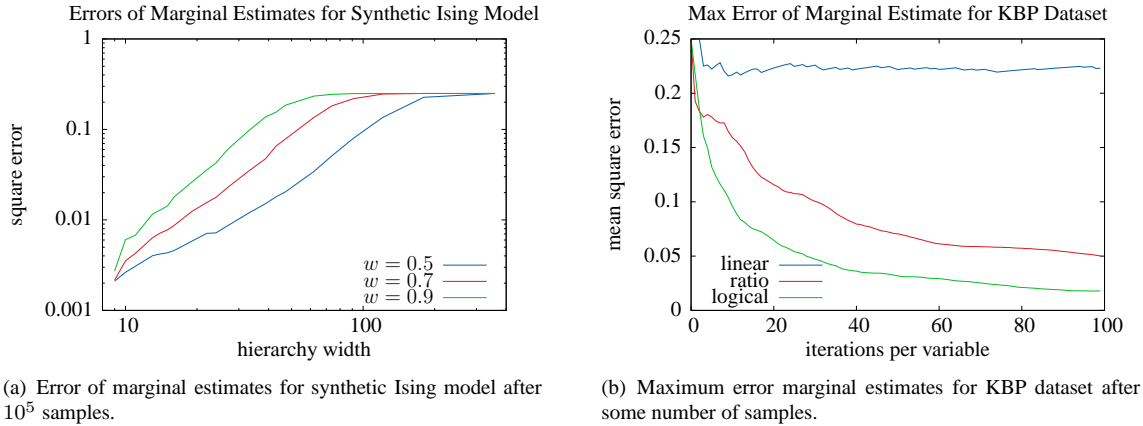
%
\centering
\subfigure[Error of marginal estimates for synthetic Ising model
after $10^5$ samples.]{%
\centering%
\resizebox{!}{.30\textwidth}{
\Large \input{plotsynth.tex}%
}
\label{figSynthPlot}}
\qquad
\subfigure[Maximum error marginal estimates for KBP dataset after some number
of samples.]{%
\centering%
\resizebox{!}{.30\textwidth}{
\Large \input{plotvariance.tex}%
}
\label{figVariancePlot}}
\caption{
Experiments illustrate how convergence is affected by hierarchy width and
semantics.
}
\label{figExperiments} 
\end{figure}



\paragraph*{Synthetic Data} We constructed a synthetic dataset by using
an ensemble of Ising model graphs each with $360$
nodes, $359$ edges, and treewidth $1$, but with different hierarchy widths.
These graphs ranged from the star graph
(like in Figure \ref{figVotingModelLinear}) to the path graph; and each
had different hierarchy width.
For each graph, we were able to calculate the exact true marginal
of each variable because of the small tree-width. We then ran Gibbs sampling 
on each graph, and calculated the error of the marginal estimate of a 
single arbitrarily-chosen query variable.
Figure~\ref{figSynthPlot} shows the result with different weights and 
hierarchy width. It shows that, even for tree graphs with the same
number of nodes and edges, the mixing time can still vary depending on the
hierarchy width of the model.

\paragraph*{Real-World Applications} We observed that the hierarchical templates
that we focus on in this work appear frequently in real applications. For example,
all five knowledge base population (KBP) systems illustrated by 
Shin et al.~\cite{wu2015incremental} contain subgraphs that are grounded by hierarchical 
templates. Moreover, sometimes a factor graph is solely grounded by
hierarchical templates, and thus provably mixes rapidly by our
theorem while achieving high quality. To validate this, we constructed a
hierarchical template for the Paleontology application used by
Shanan et al.~\cite{Peters:2014:ArXiv}.
We found that when using the ratio semantic, we were able to get an F1 score
of 0.86 with precision of 0.96. On the same task, this quality is actually higher
than professional human volunteers~\cite{Peters:2014:ArXiv}. For comparison,
the linear semantic achieved an F1 score of 0.76 and the logical achieved 0.73.

The factor graph we used in this Paleontology application is large enough that
it is intractable, \crcchange{using exact inference, to}
estimate the true marginal to investigate the mixing behavior.
Therefore, we chose a subgraph of a KBP system used by
Shin et al.~\cite{wu2015incremental} that can be grounded by a
hierarchical template
and chose a setting of the weight such that the true marginal was 0.5 for
all variables. We then ran Gibbs sampling on this subgraph and report
the average error of the marginal estimation in Figure~\ref{figVariancePlot}.
Our results illustrate the effect of changing the semantic on
a more complicated model from a real application, and show similar behavior
to our simple voting example.



\section{Conclusion}

This paper showed that for a class of factor graph templates,
hierarchical templates, Gibbs sampling mixes in polynomial time.  It
also introduced the graph property hierarchy width, and showed that
for graphs of bounded factor weight and hierarchy width, Gibbs
sampling converges rapidly.  These results may aid in better
understanding the behavior of Gibbs sampling for both template and
general factor graphs.

\subsection*{Acknowledgments} 
{
\footnotesize

Thanks to Stefano Ermon and Percy Liang for helpful conversations.

The authors acknowledge the support of:
DARPA FA8750-12-2-0335;
NSF IIS-1247701;
NSF CCF-1111943;
DOE 108845;
NSF CCF-1337375;
DARPA FA8750-13-2-0039; 
NSF IIS-1353606; ONR N000141210041
and N000141310129; NIH U54EB020405; Oracle; NVIDIA; Huawei; SAP Labs;
Sloan Research
Fellowship; Moore Foundation; American Family
Insurance; Google; and Toshiba.

}

\begingroup
\renewcommand{\section}[2]{\subsubsection*{#2}}
\small
\bibliographystyle{plainnat} 
\bibliography{references}
\endgroup

\iftoggle{withappendix}{

\clearpage

\appendix

\section{Proof of Voting Program Rates}
Here, we prove the convergence rates stated in Theorem
\ref{thmVotingBounds}.  The strategy for the upper bound proofs
involves constructing a \emph{coupling} between the Gibbs sampler
and another process that attains the equilibrium distribution at each
step.  First, we restate the definition of mixing time in terms of the
\emph{total variation distance}, a quantity which we will use in the proofs
in this section.

\begin{definition}[Total Variation Distance]
The \emph{total variation distance}~\cite[p. 48]{levin2009markov} is a distance
metric between two probability measures $\mu$ and $\nu$ over probability space
$\Omega$ defined as
\[
  \tvdist{\mu - \nu} = \max_{A \subset \Omega} \Abs{\mu(A) - \nu(A)},
\]
that is, the maximum difference between the probabilities that $\mu$ and
$\nu$ assign to a single event.
\end{definition}

\begin{definition}[Mixing Time]
The \emph{mixing time} of a Markov chain is the first time $t$ at
which the estimated distribution $\mu_t$ is within total variation distance
$\frac{1}{4}$
of the true distribution~\cite[p. 55]{levin2009markov}.  That is,
\[
  t_{\mathrm{mix}}
  =
  \min \left\{
    t
    : 
    \tvdist{\mu_t - \pi}
    \le
    \frac{1}{4}
  \right\}.
\] 
\end{definition}

Next, we define a coupling~\citesec{coupling}.
\begin{definition}[Coupling]
A coupling of two random variables $X$ and $X'$ defined on some separate
probability spaces $\mathbb{P}$ and $\mathbb{P}'$ is any new probability
space $\hat{\mathbb{P}}$ over which there are two random variables
$\hat X$ and $\hat X'$ such that $X$ has the same distribution as $\hat X$
and $X'$ has the same distribution as $\hat X'$.
\end{definition}

Given a coupling of two Markov processes $X_k$ and
$X_k'$ with the same transition matrix, the \emph{coupling time} is defined as
the first time $T$
when $\hat X_k = \hat X_k'$.  The following theorem lets us bound the
total variance distance in terms of the coupling time.
\begin{theorem}[Theorem 5.2 in~\citet{levin2009markov}]
\label{thmCouplingTime}
For any coupling $(\hat X_k, \hat X_k')$ with coupling time $T$, if
we let $\nu_k$ and $\nu_k'$ denote the distributions of $X_k$ and $X_k'$
respectively, then
\[
  \norm{\nu_k - \nu_k'}_{\mathrm{tv}}
  \le
  \Prob{T > k}.
\]
\end{theorem}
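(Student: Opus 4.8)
The plan is to prove this via the classical coupling inequality, exploiting the defining property of the coupling that the marginal law of $\hat X_k$ is $\nu_k$ and that of $\hat X_k'$ is $\nu_k'$. The one structural assumption I need — implicit in the very definition of the coupling time — is that the coupling is \emph{coalescing}: once the two chains agree at time $T$, they agree at every subsequent time. I would first record that this is without loss of generality. Because both chains evolve under the same transition matrix, after they first meet we may feed them a common source of update randomness; this modified process is still a valid coupling and, crucially, changes neither marginal distribution $\nu_k$ nor $\nu_k'$. So I may assume $\hat X_t = \hat X_t'$ for all $t \ge T$.

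Next, I would fix an arbitrary event $A \subset \Omega$ and rewrite the signed difference of probabilities through the coupled variables as $\nu_k(A) - \nu_k'(A) = \Prob{\hat X_k \in A} - \Prob{\hat X_k' \in A}$. Splitting each probability according to whether $T \le k$ or $T > k$, the coalescing property forces $\hat X_k = \hat X_k'$ on the event $\{T \le k\}$, so $\{\hat X_k \in A, T \le k\} = \{\hat X_k' \in A, T \le k\}$ and the two contributions from $\{T \le k\}$ cancel exactly. What remains is $\Prob{\hat X_k \in A, T > k} - \Prob{\hat X_k' \in A, T > k}$; dropping the nonnegative subtracted term and bounding the remaining probability by $\Prob{T > k}$ yields $\nu_k(A) - \nu_k'(A) \le \Prob{T > k}$.

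Finally, I would take the maximum over all events $A$. Applying the displayed inequality to the complement $A^c$ negates the signed difference, so $|\nu_k(A) - \nu_k'(A)| \le \Prob{T > k}$ holds for every $A$; taking the maximum gives $\tvdist{\nu_k - \nu_k'} \le \Prob{T > k}$, which is the claim.

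The main obstacle here is not the computation — it is a two-line cancellation — but rather the clean justification of the coalescing reduction, i.e.\ arguing that we are entitled to assume the chains stick together after time $T$ without disturbing either marginal. Once that reduction is in hand, every downstream step is routine.
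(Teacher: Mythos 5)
Your proof is correct and is precisely the standard argument for the coupling inequality that the paper defers to by citing \citet{levin2009markov} rather than proving itself: reduce to a coalescing coupling (noting this preserves both marginals and the law of $T$), cancel the contributions on $\{T \le k\}$, and bound what remains by $\Prob{T > k}$. You rightly identify the coalescing reduction as the only non-routine step, and your justification of it is sound.
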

All of the coupling examples in this section use a \emph{correlated flip
coupler}, which consists of two Gibbs samplers $\hat X$ and $\hat X'$, each
of which is running with the same random inputs.  Specifically, both
samplers choose to sample the same variable at each timestep.  Then,
if we define $p$ as the probability of sampling $1$ for $\hat X$, and
$p'$ similarly, it assigns both variables to $1$ with probability
$\min(p, p')$, assigns both variables to $0$ with probability
$\min(1-p, 1-p')$,
and assigns different values with probability $\Abs{p - p'}$.  If we 
initialize $\hat X$ with an arbitrary distribution $\nu$ and $\hat X'$ with
the stationary distribution $\pi$, it is
trivial to check that $\hat X_k$ has distribution $P^k \nu$, and
$\hat{X}_k'$ always has distribution $\pi$.  Applying Theorem
\ref{thmCouplingTime} results in
\[
  \norm{P^k \nu - \pi}_{\mathrm{tv}}
  \le
  \Prob{T > k}.
\]
Now, we prove the bounds stated in Theorem
\ref{thmVotingBounds} as a collection of four statements about the upper and
lower bounds of the mixing time.

\begin{statement}[UB for Voting: Logical and Ratio]
  \label{stmtVotingUBLog}
  For the voting example, assume that all the weights on the variables are
  bounded by $\Abs{w} \le M$, $\Abs{w_{T_x}} \le M$ and $\Abs{w_{F_x}} \le M$.
  Then for either the logical or ratio semantics,
  for sufficiently small constant $\epsilon$, 
  \[
    t_{\mathrm{mix}}(\epsilon)
    =
    O\left(n \log(n) \right).
  \]
\end{statement}
\begin{proof}
  Recall that, for the voting program, the weight of a world is
  \[
    W(q, t, f)
    =
    w q g(\norm{t}_1) - w q g(\norm{f}_1)
    +
    \sum_{x=1}^n w_{T(x)} t_x + \sum_{x=1}^n w_{F(x)} f_x,
  \]
  where $q \in \{-1, 1\}$ and $\{t_x, f_x \} \subseteq \{0, 1\}$.

  Consider a correlated-flip coupler running on this Markov process, producing
  two chains $X$ and $\bar X$.
  For either chain, if we sample a variable $T(x)$ at time $k$, and let $u$
  denote the number of
  $T(y)$ variables for $y \ne x$ that are true, then
  \[
    \Prob{T(x)^{(k+1)} = 1}
    =
    \left(1 + \exp\left(
      w q \left( g(u) - g(u+1) \right) - w_{T(x)}
    \right) \right)^{-1}.
  \]
  Since $g(x + 1) - g(x) \le 1$ for any $x$, it follows that
  \[
    \Prob{T(x)^{(k+1)} = 1}
    \ge
    \left(1 + \exp(2M) \right)^{-1}.
  \]
  We now define the constant
  \[
    p_M = \left(1 + \exp(2M) \right)^{-1}.
  \]
  Also, if $u$ non-$T(x)$ variables are true in the $X$ process, and $\bar u$
  variables are true in the $\bar X$ process, then the probability of the 
  processes $X$ and $\bar X$ not producing the same value for $Y_i$ is
  \begin{dmath*}
    p_D
    =
    \Abs{
      \Prob{T(x)^{(k+1)} = 1}
      -
      \Prob{\bar T(x)^{(k+1)} = 1}
    }
    =
    \Abs{
      \left(1 + \exp\left(
        w q \left( g(u) - g(u+1) \right) - w_{T(x)}
      \right) \right)^{-1}
      -
      \left(1 + \exp\left(
        w \bar q \left( g(\bar u) - g(\bar u+1) \right) - w_{T(x)}
      \right) \right)^{-1}
    },
  \end{dmath*}
  where the last statement follows from continuous differentiability of the
  function $h(x) = \left(1 + \exp(-x) \right)^{-1}$.
  Notice that, if $u = \Omega(n)$, then
  \[
    p_D
    =
    O(n^{-1})
  \]
  for both logical and ratio semantics.

  The same logic will show that, if we sample any $F(x)$, then
  \[
    \Prob{F(x)^{(k+1)} = 1}
    \ge
    p_M,
  \]
  and
  \[
    p_D = O(g(v + 1) - g(v) + g(\bar v + 1) - g(\bar v)),
  \]
  where $v$ and $\bar v$ are the number of non-$F(x)$ variables that are true
  in the $X$ and $\bar X$ processes respectively.  Again as above, if 
  $v = \Omega(n)$, then
  \[
    p_D
    =
    O(n^{-1}).
  \]

  Next, consider the situation if we sample $Q$.  In this case,
  \[
    \Prob{Q^{(k+1)} = 1}
    =
    \left(1 + \exp\left(
      2 w \left( g(\norm{t}_1) - g(\norm{f}_1) \right)
    \right) \right)^{-1}.
  \]
  Therefore, of we let $u$ denote the number of true $T(x)$
  variables and $v$ denote the number of true $F(x)$ variables, and similarly
  for $\bar u$ and $\bar v$, then the probability of the 
  processes $X$ and $\bar X$ not producing the same value for $Q$ is
  \begin{dmath*}
    p_D
    =
    \Abs{
      \Prob{Q^{(k+1)} = 1}
      -
      \Prob{\bar Q^{(k+1)} = 1}
    }
    =
    \Abs{
      \left(1 + \exp\left(
        2 w \left( f(v) - f(u) \right)
      \right) \right)^{-1}.
      -
      \left(1 + \exp\left(
        2 w \left( f(\bar v) - f(\bar u) \right)
      \right) \right)^{-1}.
    }
    =
    O\left(\Abs{
      f(v) - f(u) - f(\bar v) + f(\bar u),
    }\right)
  \end{dmath*}
  where as before, the last statement follows from continuous differentiability
  of the function $g(x) = \left(1 + \exp(-x) \right)^{-1}$.
  Furthermore, if all of $u$, $v$, $\bar u$, and $\bar v$ are $\Omega(n)$,
  then
  \[
    p_D = O(1).
  \]

  Now, assume that our correlated-flip coupler runs for $8 n \log n$ steps on
  the Gibbs sampler.  Let $E_1$ be the event that, after the first $4 n \log n$
  steps, each of the variables has been sampled at least once.  This will have
  probability at least $\frac{1}{2}$ by the coupon collector's problem.

  Next, let $E_2$ be the event that, after the first $4 n \log n$ steps,
  $\min(u, v, \bar u, \bar v) \ge \frac{p_M n}{2}$ for the next $4 n \log n$
  steps for both samplers.  After all
  the entries have been sampled, $u$, $v$, $\bar u$, and $\bar v$ will each be
  bounded from below by a binomial random variable with parameter
  $p_M$
  at each timestep, so from Hoeffding's inequality, the probability that
  this constraint will
  be violated by a sampler at any particular step is less than
  $\exp\left(-\frac{1}{2} p_M n \right)$.  Therefore,
  \[
    \Probc{E_2}{E_1}
    \ge
    \left(1 - \exp\left(-\frac{1}{2} p_M n \right) \right)^{4 n \log n}
    =
    \Omega(1).
  \]

  Let $E_3$ be the event that all the variables are resampled at least
  once between time $2 n \log n$ and $4 n \log n$.  Again this event has 
  probability at least $\frac{1}{2}$.

  Finally, let $C$ be the event that
  coupling occurs at time $4 n \log n$.  Given $E_1$, $E_2$, and $E_3$,
  this probability is equal to the probability that each variable
  coupled individually the last time it was sampled.  For all the $Y_i$,
  our analysis above showed that this probability is
  \[
    1 - p_D = 1 - O(n^{-1}),
  \]
  and for $Q$, this probability is
  \[
    1 - p_D = \Omega(1).
  \]
  Therefore,
  \[
    \Probc{C}{E_1, E_2, E_3}
    \ge
    \Omega(1) \left(1 - O(n^{-1}) \right)^{2n}
    =
    \Omega(1),
  \]
  and since
  \[
    \Prob{C}
    =
    \Probc{C}{E_1, E_2, E_3}
    \Prob{E_3}
    \Probc{E_2}{E_1}
    \Prob{E_1},
  \]
  and all these quantities are $\Omega(1)$,
  we can conclude that $\hat{\mathbb{P}}(C) = \Omega(1)$.

  Therefore, this process couples with at least some constant probability
  $P$ independent of $n$
  after $8 n \log n$ steps.  Since we can run this coupling argument
  independently an arbitrary number of times, it follows that, after
  $8 L n \log n$ steps, the probability of coupling will be at least
  $1 - (1 - P)^L$.  Therefore, by Theorem \ref{thmCouplingTime},
  for any initial distribution $\nu$,
  \[
    \| P^{8 L n \log n} \nu - \pi \|_{\mathrm{tv}}
    \le
    \hat{\mathbb{P}}(T > 8 L n \log n) \le (1 - P)^L.
  \]
  For any $\epsilon$, this will be less than $\epsilon$ when
  \[
    L \ge \frac{\log(\epsilon)}{\log(1 - p_C)},
  \]
  which occurs when
  $t \ge 8 n \log n \frac{\log(\epsilon)}{\log(1 - p_C)}$.
  Letting $\epsilon = \frac{1}{4}$,
  \[
    t_{\mathrm{mix}} = 8 n \log n \frac{\log(4)}{-\log(1 - p_C)}
  \]
  produces the desired result.
\end{proof}

\begin{statement}[LB for Voting: Logical and Ratio]
  For the voting example using either the logical or ratio
  semantics, a lower bound for the mixing time for sufficiently small
  constant values of $\epsilon$ is $\Omega(n \log n)$.
\end{statement}
\begin{proof}
  At a minimum, in order to converge, we must sample all the variables.  From
  the coupon collector's problem, this requires $\Omega(n \log n)$ time.
\end{proof}

\begin{statement}[UB for Voting: Linear]
  For the voting example, assume that all the weights on the variables are
  bounded by $\Abs{w} \le M$, $\Abs{w_{T(x)}} \le M$ and $\Abs{w_{F(x)}} \le M$.
  Then for linear semantics, 
  \[
    t_{\mathrm{mix}}
    =
    2^{O(n)}.
  \]
\end{statement}
\begin{proof}
  From our algebra above in the proof of Statement \ref{stmtVotingUBLog}, we
  know that at any timestep, the probability of not coupling the sampled
  variable is bounded; that is
  \[
    p_D = O(1).
  \]
  Therefore, if we run a correlated-flip coupler for $2 n \log n$ timesteps,
  the probability that coupling will have occurred is greater than the
  probability
  that all variables have been sampled (which is $\Omega(1)$) times
  the probability that all variables coupled the last time they were sampled.
  Thus if $C$ is the event that coupling occurs,
  \[
    \Prob{C}
    \ge
    \Omega(1) \left(1 - O(1)\right)^{2 n + 1}
    =
    \exp(-O(n)).
  \]
  Therefore, if we run for some $t = 2^{O(n)}$ timesteps, coupling
  will have occurred at least once with high probability.  It follows that 
  the mixing time is
  \[
    t_{\mathrm{mix}} = 2^{O(n)},
  \]
  as desired.
\end{proof}

\begin{statement}[LB for Voting: Linear]
  For the voting example using linear semantics and bounded weights, a
  lower bound for the mixing time of a worst-case model
  is $2^{\Omega(n)}$.
\end{statement}
\begin{proof}
  Consider a sampler with unary weights $w_{T(x)} = w_{F(x)} = 0$.
  Assume that it starts in a state where $Q = 1$, all the $T(x) = 1$,
  and all the $F(x) = 0$. 
  We will show that it takes an exponential amount of time until $Q = -1$.
  From above, the probability of flipping $Q$ will be
  \[
    \Prob{Q^{(k+1)} = -1}
    =
    \left(1 + \exp\left(
      2 w \left( \norm{t}_1) - \norm{f}_1) \right)
    \right) \right)^{-1}.
  \]
  Meanwhile, the probability to flip any $T(x)$ while $Q = 1$ is
  \[
    \Prob{T(x)^{(k+1)} = 0}
    =
    \left(1 + \exp(w) \right)^{-1}
    =
    p,
  \]
  for constant $p$, and the probability of flipping any $F(x)$ is similarly
  \[
    \Prob{F(x)^{(k+1)} = 1}
    =
    p.
  \]
  Now, consider the following events which could happen at any timestep.
  While $Q = 1$,
  let $E_T$ be the event that $\norm{t}_1 \le (1 - 2p) n$, and let $E_F$ be
  the event that $\norm{f}_1 \ge 2 p n$.  Since $\norm{t}_1$ and
  $\norm{f}_1$ are both bounded
  by binomial random variables with parameters $1 - p$ and $p$ respectively,
  Hoeffding's inequality states that, at any timestep,
  \[
    \Prob{E_T} = \Prob{\norm{t}_1 \le (1 - 2p) n} \le \exp(-2 p^2 n),
  \]
  and similarly
  \[
    \Prob{E_F} = \Prob{\norm{f}_1 \ge 2 p n} \le \exp(-2 p^2 n).
  \]
  Now, while these bounds are satisfied, let $E_Q$ be the event that
  $Q = -1$.  This will be bounded by
  \[
    \Prob{E_Q}
    =
    \left(
      1
      +
      \exp\left(2w (1 - 4p) n \right)
    \right)^{-1}
    \le
    \exp(-2w (1 - 4p) n).
  \]
  It follows that at any timestep,
  \[
    \Prob{E_T \vee E_F \vee E_Q} = \exp(-O(n)),
  \]
  so, at any timestep $k$,
  \[
    \Prob{Q^{(k)} = -1} = k \exp(-O(n)).
  \]
  However, by symmetry,
  under the stationary distribution $\pi$, this probability must be
  $\frac{1}{2}$.  Therefore, the total
  variation distance is bounded by
  \[
    \| P_t \nu - \pi \|_{\mathrm{tv}}
    \ge
    \frac{1}{2} - t \exp(-O(n)).
  \]
  So, for convergence to less than $\epsilon = \frac{1}{4}$, for example,
  we must require at least $2^{O(a)}$ steps.  This proves the statement.
\end{proof}

\section{Proof of Theorem \ref{thmPolynomialMixingTime}}

In this section, we prove the main result of the paper, Theorem
\ref{thmPolynomialMixingTime}.  First, we state some basic lemmas we will
need for the proof in Section \ref{secStatementLemmas}.
Then, we prove the main result inductively
in Section \ref{ssMainProofThmPMT}.  Finally, we prove
the lemmas in Section \ref{secProofLemmas}.

\subsection{Statement of Lemmas}
\label{secStatementLemmas}

Note that some of these lemmas are restated from the body of the paper.

\begin{definition}[Absolute Spectral Gap]
Let $P$ be the transition matrix of a Markov process.  Since it is a
Markov process, one of its eigenvalues, the dominant eigenvalue, must be $1$.
The \emph{absolute spectral gap} of the Markov process is the value
\[
  \gamma = 1 - \max_\lambda \Abs{\lambda},
\]
where the maximum is taken over all non-dominant eigenvalues of $P$.
\end{definition}

\lemmaReduceIndependent*

\lemmaRemoveOneFactor*

\lemmaNoFactors*

\begin{restatable}{lemma}{lemmaRelaxationTime}
\label{lemmaRelaxationTime}
Let $P$ be the transition matrix of a reversible, irreducible
Markov chain with state space $\Omega$ and absolute spectral gap $\gamma$,
and let
$\pi_{\mathrm{min}} = \min_{x \in \pi} \pi(x)$.  Then
\[
  t_{\mathrm{mix}}(\epsilon)
  \le
  -\log\left(\epsilon \pi_{\mathrm{min}} \right) \frac{1}{\gamma}.
\]
\end{restatable}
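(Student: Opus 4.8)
The plan is to prove the standard spectral bound relating the mixing time to the relaxation time $1/\gamma$; this is a classical result~\cite{levin2009markov}, so the task is to reproduce it cleanly with the stated constants. The strategy is to control the total variation distance through the $L^2(\pi)$ distance and then exploit the spectral decomposition of $P$. First I would record that, because the chain is reversible, $P$ is self-adjoint with respect to the inner product $\langle f, g \rangle_\pi = \sum_{x} \pi(x) f(x) g(x)$; hence it has real eigenvalues $1 = \lambda_1 > \lambda_2 \ge \cdots \ge \lambda_{\Abs{\Omega}} \ge -1$ together with an orthonormal (in $\langle \cdot, \cdot \rangle_\pi$) basis of real eigenfunctions $f_1 \equiv 1, f_2, \ldots, f_{\Abs{\Omega}}$.

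Next I would write the $t$-step kernel in this basis. A standard computation gives
\[
  \frac{P^t(x, y)}{\pi(y)}
  =
  \sum_{j} \lambda_j^t f_j(x) f_j(y),
\]
whose $j = 1$ term is identically $1$ and recovers $\pi$. Subtracting it and applying the Cauchy--Schwarz inequality (with weight $\pi$) to pass from the $L^1$ norm, which is twice the total variation distance, to the $L^2(\pi)$ norm yields
\[
  4 \tvdist{P^t(x, \cdot) - \pi}^2
  \le
  \sum_{y} \pi(y) \left( \frac{P^t(x, y)}{\pi(y)} - 1 \right)^2
  =
  \sum_{j \ge 2} \lambda_j^{2t} f_j(x)^2,
\]
where the last equality uses orthonormality of the $f_j$. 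Here I would also invoke the completeness identity $\sum_{j} f_j(x)^2 = \pi(x)^{-1}$, so that $\sum_{j \ge 2} f_j(x)^2 \le \pi(x)^{-1} \le \pi_{\mathrm{min}}^{-1}$. Bounding every $\Abs{\lambda_j}$ for $j \ge 2$ by $1 - \gamma$ then gives
\[
  \tvdist{P^t(x, \cdot) - \pi}
  \le
  \frac{(1 - \gamma)^t}{2 \sqrt{\pi_{\mathrm{min}}}}.
\]

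Finally I would solve for the first time at which the right-hand side drops below $\epsilon$. Requiring $(1 - \gamma)^t \le 2 \epsilon \sqrt{\pi_{\mathrm{min}}}$ and taking logarithms, it suffices that $t \ge -\log(2 \epsilon \sqrt{\pi_{\mathrm{min}}}) / (-\log(1 - \gamma))$. Using the elementary inequality $-\log(1 - \gamma) \ge \gamma$ to replace the denominator by $\gamma$, and the crude bounds $-\log(2\epsilon) \le -\log \epsilon$ and $-\tfrac{1}{2} \log \pi_{\mathrm{min}} \le -\log \pi_{\mathrm{min}}$ (valid since $\pi_{\mathrm{min}} \le 1$) to enlarge the numerator to $-\log(\epsilon \pi_{\mathrm{min}})$, yields the claimed bound $t_{\mathrm{mix}}(\epsilon) \le -\log(\epsilon \pi_{\mathrm{min}}) / \gamma$. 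There is no genuine obstacle here beyond bookkeeping: the only steps needing care are justifying the completeness identity $\sum_j f_j(x)^2 = \pi(x)^{-1}$ from orthonormality, and retaining the constant-and-square-root slack on the right so that it is absorbed cleanly into the final $-\log(\epsilon \pi_{\mathrm{min}})$ form rather than degrading the bound.
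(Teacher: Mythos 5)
Your proof is correct: it is the standard spectral argument (eigenfunction expansion of $P^t(x,y)/\pi(y)$, Cauchy--Schwarz to pass to $L^2(\pi)$, the completeness identity $\sum_j f_j(x)^2 = \pi(x)^{-1}$, and $-\log(1-\gamma) \ge \gamma$), and the bookkeeping at the end correctly absorbs the $2\sqrt{\pi_{\mathrm{min}}}$ slack into $-\log(\epsilon\pi_{\mathrm{min}})$. The paper does not prove this lemma at all---it simply cites Theorem 12.3 of \citet{levin2009markov}---and your argument is a faithful, self-contained rendering of exactly that cited proof, so there is nothing further to compare.
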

\begin{proof}
This is Theorem 12.3 from Markov Chains and Mixing Times
\cite[p. 155]{levin2009markov}, and a complete proof can be found in that book.
\end{proof}





\subsection{Main Proof}
\label{ssMainProofThmPMT}

We achieve the proof of Theorem \ref{thmPolynomialMixingTime}
in two steps.  First, we inductively bound the absolute
spectral gap of Gibbs sampling on factor graphs in terms of the hierarchy
width.  Then, we use the bound on the spectral gap to bound the mixing time.

\begin{lemma}
\label{lemmaBoundASG}
If $G$ is a factor graph with $n$ variables, maximum factor weight $M$ and
hierarchy width $h$, then Gibbs sampling running on $G$ will have
absolute spectral gap $\gamma$, where
\[
  \gamma
  \ge
  \frac{1}{n}
  \exp\left( -3 h M \right).
\]
\end{lemma}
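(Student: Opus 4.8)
The plan is to prove the bound by strong induction that follows the recursive structure of the definition of $\mathsf{hw}$, using the three structural lemmas as the atomic steps. I treat $M$ as a fixed uniform upper bound on every factor weight, so that every subgraph produced by peeling off a factor or by splitting into connected components still has maximum factor weight at most $M$; this lets me invoke the induction hypothesis with the \emph{same} constant $M$ throughout. The induction is on the quantity $n + e$ (number of variables plus number of factors), which I will verify strictly decreases in both recursive cases of the definition.

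For the base case I take $e = 0$ and $n = 1$, a single isolated variable. Lemma \ref{lemmaNoFactors} gives $\gamma(G) = 1$, and since $\mathsf{hw}(G) = 0$ by (\ref{eqnHWBase}), the claimed bound reads $\frac{1}{n}\exp(-3hM) = 1$, so it holds with equality. The inductive step then splits exactly as the definition does. If $G$ is connected and has at least one factor, I choose the minimizing factor $\phi^*$ from (\ref{eqnHWConnected}) and set $\bar G = \langle V, \Phi - \{\phi^*\}\rangle$, which has the same $n$, one fewer factor, and $\mathsf{hw}(\bar G) = h - 1$. The induction hypothesis gives $\gamma(\bar G) \ge \frac{1}{n}\exp(-3(h-1)M)$, and Lemma \ref{lemmaRemoveOneFactor} contributes the extra factor $\exp(-3M)$, so $\gamma(G) \ge \gamma(\bar G)\exp(-3M) \ge \frac{1}{n}\exp(-3hM)$, as required.

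If instead $G$ is disconnected with components $G_1, \ldots, G_m$ where $m \ge 2$, then (\ref{eqnHWDisconnected}) gives $\mathsf{hw}(G_i) \le h$, and each component has $n_i \le n - 1 < n$ variables while the factors merely partition, so $n_i + e_i < n + e$ and the induction hypothesis applies to every $G_i$. Using $\mathsf{hw}(G_i) \le h$ together with $M \ge 0$ gives $\gamma(G_i) \ge \frac{1}{n_i}\exp(-3\,\mathsf{hw}(G_i)\,M) \ge \frac{1}{n_i}\exp(-3hM)$; Lemma \ref{lemmaReduceIndependent} then yields $\gamma(G) \ge \min_i \frac{n_i}{n}\gamma(G_i) \ge \frac{1}{n}\exp(-3hM)$, the $n_i$ cancelling. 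The remaining case $e = 0$ with $n > 1$ is a disjoint union of isolated variables and is handled here as well, reducing to the single-variable base case.

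I expect essentially no analytic difficulty in this argument: all of the work sits inside the three cited lemmas, which are the genuine spectral-gap comparisons. The only things to get right here are bookkeeping. The main point is verifying that $n + e$ strictly decreases in the disconnected case, which needs $m \ge 2$ so that each $n_i \le n - 1$ while each $e_i \le e$, and checking that the exponents combine to exactly $-3hM$ in each branch. The mildest subtlety is adhering to the uniform-$M$ convention, so that the induction hypothesis is available with the same constant for every subgraph encountered in the recursion.
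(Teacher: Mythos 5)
Your proposal is correct and follows essentially the same route as the paper: the same three-way case split (single isolated variable, connected with a factor removed via Lemma \ref{lemmaRemoveOneFactor}, disconnected via Lemma \ref{lemmaReduceIndependent}) driven by the recursive definition of hierarchy width, with the paper phrasing the induction as a multiple induction on variables and factors rather than on $n+e$. The bookkeeping points you flag (strict decrease of the induction measure in the disconnected case, the uniform bound $M$ passing to subgraphs) are handled identically, if more implicitly, in the paper's proof.
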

\begin{proof}
We will prove this result by multiple induction on the number of variables
and number of factors of $G$.  In what follows, we assume that the statement
holds for all graphs with either fewer variables and no more factors than $G$,
or fewer factors and no more variables than $G$.

Consider the connectedness of $G$.  There are three possibilities:
\begin{enumerate}
  \item $G$ has one variable and no factors.
  \item $G$ is connected and has at least one factor.
  \item $G$ is disconnected.
\end{enumerate}
We consider these cases separately.

\paragraph{Case 1}
If $G$ has no factors and one variable, then by Lemma \ref{lemmaNoFactors},
its spectral gap will be
\[
  \gamma = 1.
\]
Also, if $G$ has no factors, by (\ref{eqnHWBase}), its hierarchy width is
$h = 0$.  Therefore,
\begin{dmath*}
  \gamma
  =
  \frac{1}{1} \exp(0)
  =
  \frac{1}{n} \exp(-3 hM),
\end{dmath*}
which is the desired result.

\paragraph{Case 2}
Next, we consider the case where $G$ is connected and has at least one factor.
In this case, by (\ref{eqnHWConnected}), its hierarchy width is
\[
  \mathsf{hw}(G)
  =
  1 + \min_{e \in E} \mathsf{hw}(\langle N, E - \{e\}\rangle).
\]
Let $e$ be an edge that minimizes this quantity, and let $\bar G$ denote the
factor graph that results
from removing the corresponding factor from $G$.  Clearly,
\[
  \mathsf{hw}(G)
  =
  1 + \mathsf{hw}(\bar G).
\]
Since $G$ and $\bar G$ differ by only one factor,
by Lemma \ref{lemmaRemoveOneFactor}, 
\[
  \gamma \ge \bar \gamma \exp\left(-3M \right).
\]
Furthermore, since $\bar G$ has fewer factors than $G$, by the inductive
hypothesis,
\[
  \bar \gamma
  \ge
  \frac{1}{n}
  \exp\left( -3M \mathsf{hw}(\bar G) \right).
\]
Therefore,
\begin{dmath*}
  \gamma
  \ge
  \bar \gamma \exp\left(-M \right)
  \ge
  \frac{1}{n}
  \exp\left( -3M \mathsf{hw}(\bar G) \right)
  \exp\left(-3M \right)
  =
  \frac{1}{n}
  \exp\left( -3M \left( 1 + \mathsf{hw}(\bar G) \right) \right)
  =
  \frac{1}{n}
  \exp\left( -3M \mathsf{hw}(G) \right),
\end{dmath*}
which is the desired result.

\paragraph{Case 3}
Finally, consider the case where $G$ is disconnected.  Let
$G_1, G_2, \ldots, G_m$ be the connected components of $G$ for some $m \ge 2$.
In this case, by (\ref{eqnHWDisconnected}), its hierarchy width is
\[
  \mathsf{hw}(G) = \max_i \mathsf{hw}(G_i).
\]
By Lemma \ref{lemmaReduceIndependent},
\[
  \gamma
  =
  \min_{i \le m} \frac{n_i}{n} \gamma_i,
\]
where $n_i$ are the sizes of the $G_i$ and $\gamma_i$ are their absolute
spectral gaps.  We further know that each of the $G_i$ has fewer nodes than
$G$, so by the inductive hypothesis,
\[
  \gamma_i
  \ge
  \frac{1}{n_i}
  \exp\left( -3M \mathsf{hw}(G_i) \right).
\]
Therefore,
\begin{dmath*}
  \gamma
  =
  \min_{i \le m} \frac{n_i}{n} \gamma_i
  \ge
  \min_{i \le m} \frac{n_i}{n}
  \frac{1}{n_i}
  \exp\left( -3M \mathsf{hw}(G_i) \right)
  =
  \min_{i \le m} \frac{1}{n}
  \exp\left( -3M \mathsf{hw}(G_i) \right)
  =
  \frac{1}{n}
  \exp\left( -3M \max_{i \le m} \mathsf{hw}(G_i) \right)
  =
  \frac{1}{n}
  \exp\left( -3M \mathsf{hw}(G) \right),
\end{dmath*}
which is the desired result.

Since the statement holds in all three cases, by induction it will hold for
all factor graphs.  This completes the proof.
\end{proof}

Now, we restate and prove the main theorem.

\thmPolynomialMixingTime*
\begin{proof}
From Lemma \ref{lemmaRelaxationTime}, we have that
\[
  t_{\mathrm{mix}}(\epsilon)
  \le
  -\log\left(\epsilon \pi_{\mathrm{min}} \right) \frac{1}{\gamma}.
\]
For our factor graph $G$,
\begin{dmath*}
\pi_{\mathrm{min}}
=
\min_I \pi(I)
=
\frac{\min_I \exp(W(I))}{\sum_J \exp(W(J))}.
\end{dmath*}
Since there are only at most $s^n$ worlds, it follows that
\begin{dmath*}
\pi_{\mathrm{min}}
\ge
\frac{\min_I \exp(W(I))}{s^n \max_J \exp(W(J))}
=
s^{-n} \exp\left(
  \min_I W(I)
  -
  \max_J W(J)
\right).
\end{dmath*}
Expanding the world weight in terms of the factors,
\begin{dmath*}
\pi_{\mathrm{min}}
\ge
s^{-n} \exp\left(
  \min_I \sum_{\phi \in \Phi} \phi(I)
  -
  \max_J \sum_{\phi \in \Phi} \phi(J)
\right)
\ge
s^{-n} \exp\left(
  \sum_{\phi \in \Phi} \left(
    \min_I \phi(I)
    -
    \max_J \phi(J)
  \right)
\right).
\end{dmath*}
Applying our maximum factor weight bound,
\begin{dmath*}
\pi_{\mathrm{min}}
\ge
s^{-n} \exp\left(
  -\sum_{\phi \in \Phi} M
\right)
=
s^{-n} \exp\left(-eM\right).
\end{dmath*}
Substituting this into the expression from Lemma \ref{lemmaRelaxationTime}
produces
\begin{dmath*}
  t_{\mathrm{mix}}(\epsilon)
  \le
  -\log\left(
    \frac{\epsilon}{s^n} \exp\left(-eM\right)
  \right) \frac{1}{\gamma}
  =
  \left(
    n \log(s) + e M - \log(\epsilon)
  \right)
  \frac{1}{\gamma},
\end{dmath*}
and substituting the result from Lemma \ref{lemmaBoundASG} gives
\begin{dmath*}
  t_{\mathrm{mix}}(\epsilon)
  \le
  \left(
    n \log(s) + e M - \log(\epsilon)
  \right)
  n \exp(3 h M).
\end{dmath*}
Substituting $\epsilon = \frac{1}{4}$ gives the desired result.
\end{proof}

\subsection{Proofs of Lemmas}
\label{secProofLemmas}

In this statement, we will restate and prove the lemmas stated above in
Section \ref{secStatementLemmas}.

\lemmaReduceIndependent*
\begin{proof}
Lemma \ref{lemmaReduceIndependent} follows directly from Corollary 12.12 in
Markov Chains and Mixing Times \cite[p. 161]{levin2009markov}.  For
completeness, we restate the proof here.

Since $G$ is disconnected, Gibbs sampling on $G$ is equivalent to running
$m$ independent Gibbs samplers on the $G_i$, where at each timestep, the Gibbs
sampler of $G_i$ is updated if a variable in $G_i$ is chosen; this occurs with
probability $\frac{n_i}{n}$.  Therefore, if we let $P_i$ denote the Markov
transition matrix of Gibbs sampling on $G_i$, we can write the transition
matrix $P$ of Gibbs sampling on $G$ as
\[
  P(x, y)
  =
  \sum_{i=1}^m \frac{n_i}{n} P_i(x_i, y_i)
  \prod_{j \ne i} I(x_j, y_j),
\]
where $x$ and $y$ are worlds on $G$, 
where $x_i$ and $y_i$ are subsets of the variables of $x$ and $y$ respectively
that correspond to the graph $G_i$, and $I$ is the identity matrix
($I(x,y) = 1$ if $x = y$, and $I(x,y) = 0$ otherwise).

Now, we, for each $i$, we let $f_i$ be some eigenvector of $P_i$ with
eigenvalue $\lambda_i$, and define
\[
  f(x)
  =
  \prod_{i=1}^m f_i(x_i).
\]
It follows that
\begin{dmath*}
  (Pf)(y)
  =
  \sum_x
  f(x) P(x, y)
  =
  \sum_x
  f(x)
  \sum_{i=1}^m \frac{n_i}{n} P_i(x_i, y_i)
  \prod_{j \ne i} I(x_j, y_j)
  =
  \sum_x
  \sum_{i=1}^m \frac{n_i}{n} f_i(x_i) P_i(x_i, y_i)
  \prod_{j \ne i} f_j(x_j) I(x_j, y_j)
  =
  \sum_{i=1}^m \frac{n_i}{n} 
  \sum_{x_i} f_i(x_i) P_i(x_i, y_i)
  \prod_{j \ne i} 
  \sum_{x_j} f_j(x_j) I(x_j, y_j)
  =
  \sum_{i=1}^m \frac{n_i}{n} 
  \lambda_i f_i(y_i)
  \prod_{j \ne i} 
  f_j(y_j)
  =
  f(y)
  \sum_{i=1}^m \frac{n_i}{n} \lambda_i;
\end{dmath*}
if follows that $f$ is an eigenvector of $P$ with eigenvalue
\begin{equation}
  \label{eqnLambdaFormRI}
  \lambda = \sum_{i=1}^m \frac{n_i}{n} \lambda_i.
\end{equation}
Furthermore, it is clear that such eigenvectors will form a basis for the space
of possible distributions on the worlds of $G$.  Therefore, all eigenvalues of
$P$ will be of the form (\ref{eqnLambdaFormRI}), and so
\[
  \Abs{\lambda}
  \le
  \sum_{i=1}^m \frac{n_i}{n} \Abs{\lambda_i}.
\]
Therefore,
\begin{dmath*}
  \gamma
  =
  \min_{\lambda \ne 1}
  \left( 1 - \Abs{\lambda} \right)
  \ge
  \min_{\lambda \ne 1} 
  \sum_{i=1}^m \frac{n_i}{n} \left(1 - \Abs{\lambda_i} \right)
  \ge
  \min_i
  \min_{\lambda_i \ne 1}
  \frac{n_i}{n} \left(1 - \Abs{\lambda_i} \right)
  =
  \min_i
  \frac{n_i}{n} \gamma_i,
\end{dmath*}
which is the desired result.
\end{proof}

To prove Lemma \ref{lemmaRemoveOneFactor}, we will need to first prove two
other lemmas.

\begin{lemma}
\label{lemmaROFDistBound}
Let $G$ and $\bar G$ be two factor graphs, which differ only inasmuch as
$G$ contains a single additional factor $\phi^*$ with maximum factor weight $M$.
Then, if $\pi$ and $\bar \pi$ denote the distributions on worlds for these
graphs,
\[
  \exp(-M) \bar \pi(x) \le \pi(x) \le \exp(M) \bar \pi(x). 
\]
\end{lemma}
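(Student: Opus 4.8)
The plan is to work directly from the definition of the factor-graph distribution and exploit the fact that $G$ and $\bar G$ differ only by the single factor $\phi^*$. Writing $\bar\epsilon$ for the energy of $\bar G$, the energy of $G$ is $\epsilon(I) = \bar\epsilon(I) + \phi^*(I)$, so the two normalized distributions are $\pi(x) = Z^{-1}\exp(\bar\epsilon(x) + \phi^*(x))$ and $\bar\pi(x) = \bar Z^{-1}\exp(\bar\epsilon(x))$, where $Z = \sum_J \exp(\bar\epsilon(J) + \phi^*(J))$ and $\bar Z = \sum_J \exp(\bar\epsilon(J))$ are the respective normalization constants.

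First I would form the ratio $\pi(x)/\bar\pi(x)$. The per-world exponentials $\exp(\bar\epsilon(x))$ cancel, leaving
\[
  \frac{\pi(x)}{\bar\pi(x)} = \exp\!\left(\phi^*(x)\right)\frac{\bar Z}{Z}.
\]
The whole argument then reduces to bounding the two surviving terms: the explicit $\exp(\phi^*(x))$ and the ratio of partition functions $\bar Z / Z$.

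Next I would bound $\bar Z / Z$ by comparing the summands term by term. Let $\phi_{\min} = \min_I \phi^*(I)$ and $\phi_{\max} = \max_I \phi^*(I)$; by the maximum-factor-weight bound, $\phi_{\max} - \phi_{\min} \le M$. Since $\phi_{\min} \le \phi^*(J) \le \phi_{\max}$ for every world $J$, factoring these bounds out of the sum defining $Z$ gives $\exp(\phi_{\min})\bar Z \le Z \le \exp(\phi_{\max})\bar Z$, hence $\exp(-\phi_{\max}) \le \bar Z / Z \le \exp(-\phi_{\min})$. Substituting into the ratio yields
\[
  \exp\!\left(\phi^*(x) - \phi_{\max}\right) \le \frac{\pi(x)}{\bar\pi(x)} \le \exp\!\left(\phi^*(x) - \phi_{\min}\right).
\]
Finally, using $\phi_{\min} \le \phi^*(x) \le \phi_{\max}$ together with $\phi_{\max} - \phi_{\min} \le M$ places both exponents inside $[-M, M]$: the left exponent satisfies $\phi^*(x) - \phi_{\max} \ge \phi_{\min} - \phi_{\max} \ge -M$, and the right exponent satisfies $\phi^*(x) - \phi_{\min} \le \phi_{\max} - \phi_{\min} \le M$. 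This gives $\exp(-M) \le \pi(x)/\bar\pi(x) \le \exp(M)$, which is exactly the claimed two-sided bound after multiplying through by $\bar\pi(x) \ge 0$.

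There is essentially no hard step here; the proof is a short computation, and the only thing to get right is that the normalizer ratio $\bar Z/Z$, not merely the per-world factor $\exp(\phi^*(x))$, must be controlled, and that both contributions land within the same $\exp(\pm M)$ band. The mild subtlety worth stating carefully is that the definition of maximum factor weight bounds the \emph{range} $\phi_{\max} - \phi_{\min}$ of each factor rather than its absolute value; this is precisely why the cancellation against the partition-function ratio is needed to obtain a symmetric bound rather than one that degrades with $\phi_{\max}$ or $\phi_{\min}$ individually.
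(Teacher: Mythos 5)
Your proof is correct and is essentially the same argument as the paper's: both bound the per-world value $\phi^*(x)$ and the partition-function contribution by $\min_I \phi^*(I)$ and $\max_I \phi^*(I)$, and use the range bound $\max_I\phi^*(I)-\min_I\phi^*(I)\le M$ to land in the $\exp(\pm M)$ band. The only difference is presentational---you organize it as a ratio $\pi(x)/\bar\pi(x)$ while the paper writes a chain of inequalities directly on $\pi(x)$.
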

\begin{proof}
From the definition of the distribution function,
\begin{dmath*}
  \pi(x)
  =
  \frac{
    \exp(W(x))
  }{
    \sum_{z \in \Omega} \exp(W(z))
  }
  =
  \frac{
    \exp(\bar W(x) + \phi^*(x))
  }{
    \sum_{z \in \Omega} \exp(\bar W(z) + \phi^*(z))
  }
  \ge
  \frac{
    \exp(\bar W(x) + \min_I \phi^*(I))
  }{
    \sum_{z \in \Omega} \exp(\bar W(z) + \max_I \phi^*(I))
  }
  =
  \frac{
    \exp(\bar W(x))
  }{
    \sum_{z \in \Omega} \exp(\bar W(z))
  }
  \exp\left(\min_I \phi^*(I) - \max_I \phi^*(I) \right)
  =
  \exp(-M) \bar \pi(x).
\end{dmath*}
In the other direction,
\begin{dmath*}
  \pi(x)
  =
  \frac{
    \exp(\bar W(x) + \phi^*(x))
  }{
    \sum_{z \in \Omega} \exp(\bar W(z) + \phi^*(z))
  }
  \le
  \frac{
    \exp(\bar W(x) + \max_I \phi^*(I))
  }{
    \sum_{z \in \Omega} \exp(\bar W(z) + \min_I \phi^*(I))
  }
  =
  \frac{
    \exp(\bar W(x))
  }{
    \sum_{z \in \Omega} \exp(\bar W(z))
  }
  \exp\left(\max_I \phi^*(I) - \min_I \phi^*(I) \right)
  =
  \exp(M) \bar \pi(x).
\end{dmath*}
This proves the lemma.
\end{proof}

\begin{lemma}
\label{lemmaROFTransBound}
Let $G$ and $\bar G$ be two factor graphs, which differ only inasmuch as
$G$ contains a single additional factor $\phi^*$ with maximum factor weight $M$.
Then, if $P$ and $\bar P$ denote the transition matrices for Gibbs sampling on
these graphs, for all $x \ne y$,
\[
  \exp(-M) \bar P(x, y) \le P(x, y) \le \exp(M) \bar P(x, y). 
\]
\end{lemma}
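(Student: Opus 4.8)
The plan is to mirror the proof of Lemma \ref{lemmaROFDistBound}, replacing the global normalizing sum over all worlds by the \emph{local} normalizing sum that defines a single Gibbs update. First I would recall the explicit form of the Gibbs transition matrix. For a random-scan Gibbs sampler on $n$ variables, a transition from a world $x$ to a distinct world $y$ has positive probability only when $x$ and $y$ agree on every coordinate except a single resampled variable $v$; in that case, writing $W$ for the world weight of $G$ and letting $Z_v(x)$ denote the set of worlds that agree with $x$ on every coordinate other than $v$,
\[
  P(x,y) = \frac{1}{n} \cdot \frac{\exp(W(y))}{\sum_{z \in Z_v(x)} \exp(W(z))},
\]
and similarly for $\bar P$ with $\bar W$ in place of $W$. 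Since $G$ and $\bar G$ differ only by the factor $\phi^*$, we have $W = \bar W + \phi^*$ on every world, so the extra factor appears additively in the numerator and in every term of the denominator of $P(x,y)$.

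The core of the argument is then the same two-sided bounding used for the distribution. For the upper bound I would replace $\phi^*(y)$ in the numerator by $\max_I \phi^*(I)$ and each $\phi^*(z)$ in the denominator by $\min_I \phi^*(I)$; the $\bar W$ terms then cancel between $P$ and $\bar P$, leaving a factor $\exp(\max_I \phi^*(I) - \min_I \phi^*(I))$, which is at most $\exp(M)$ by the definition of maximum factor weight. The lower bound follows symmetrically, bounding $\phi^*(y)$ below by $\min_I \phi^*(I)$ and each $\phi^*(z)$ above by $\max_I \phi^*(I)$, yielding the factor $\exp(\min_I \phi^*(I) - \max_I \phi^*(I)) \ge \exp(-M)$. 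The cases where $x$ and $y$ differ in more than one variable are vacuous, since $P(x,y) = \bar P(x,y) = 0$ there.

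The only real subtlety --- and the step I would be most careful about --- is that the relevant normalization in a Gibbs step is local: it sums $\exp(W(z))$ only over the worlds $z$ that differ from $x$ in the single resampled coordinate, not over all of $\Omega$. This is exactly what makes the statement one about $P$ rather than about $\pi$, but it does not change the bounding argument, because $\phi^*$ is bounded uniformly over all worlds; the same quantities $\min_I \phi^*(I)$ and $\max_I \phi^*(I)$ control the perturbation regardless of which subset of worlds we normalize over. Everything else is routine cancellation, identical in structure to the proof of Lemma \ref{lemmaROFDistBound}.
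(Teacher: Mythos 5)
Your proposal is correct and follows essentially the same route as the paper's proof: write the Gibbs transition probability as $\frac{1}{n}$ times a ratio of $\exp(W(\cdot))$ terms normalized over the local set of worlds differing from $x$ in the single resampled coordinate, split $W = \bar W + \phi^*$, and bound the $\phi^*$ contributions in numerator and denominator by $\max_I \phi^*(I)$ and $\min_I \phi^*(I)$ to extract the $\exp(\pm M)$ factors, with the multi-variable-difference case handled trivially since both entries vanish. Your explicit remark that the normalization is local rather than global is exactly the right point of care, and the argument goes through for the reason you give.
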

\begin{proof}
For Gibbs sampling on $G$, we know that, if worlds $x$ and $y$ differ in
the value of only one variable, and $L$ is the set of worlds that only differ
from $x$ and $y$ in the same variable, then
\[
  P(x, y)
  =
  \frac{1}{n}
  \frac{
    \exp(W(y))
  }{
    \sum_{l \in L} \exp(W(l))
  }
\]
If $\bar P$ is the transition matrix for Gibbs sampling on $\bar G$, then
the same argument will show that
\[
  \bar P(x, y)
  =
  \frac{1}{n}
  \frac{
    \exp(\bar W(y))
  }{
    \sum_{l \in L} \exp(\bar W(l))
  }.
\]
Therefore,
\begin{dmath*}
  P(x, y)
  =
  \frac{1}{n}
  \frac{
    \exp(\bar W(y) + \phi^*(y))
  }{
    \sum_{l \in L} \exp(\bar W(l) + \phi^*(y))
  }
  \ge
  \frac{1}{n}
  \frac{
    \exp(\bar W(y) + \min_I \phi^*(I))
  }{
    \sum_{l \in L} \exp(\bar W(l) + \max_I \phi^*(I))
  }
  =
  \frac{1}{n}
  \frac{
    \exp(\bar W(y))
  }{
    \sum_{l \in L} \exp(\bar W(l))
  }
  \exp\left(\min_I \phi^*(I) - \max_I \phi^*(I) \right)
  =
  \exp(-M) \bar P(x, y).
\end{dmath*}
Similarly,
\begin{dmath*}
  P(x, y)
  =
  \frac{1}{n}
  \frac{
    \exp(\bar W(y) + \phi^*(y))
  }{
    \sum_{l \in L} \exp(\bar W(l) + \phi^*(y))
  }
  \le
  \frac{1}{n}
  \frac{
    \exp(\bar W(y) + \max_I \phi^*(I))
  }{
    \sum_{l \in L} \exp(\bar W(l) + \min_I \phi^*(I))
  }
  =
  \frac{1}{n}
  \frac{
    \exp(\bar W(y))
  }{
    \sum_{l \in L} \exp(\bar W(l))
  }
  \exp\left(\max_I \phi^*(I) - \min_I \phi^*(I) \right)
  =
  \exp(M) \bar P(x, y).
\end{dmath*}
On the other hand, if $x$ and $y$ differ in more than one variable, then
\[
  P(x, y) = \bar P(x, y) = 0.
\]
Therefore, the lemma holds for any $x \ne y$.
\end{proof}

Now, we restate and prove Lemma \ref{lemmaRemoveOneFactor}.

\lemmaRemoveOneFactor*
\begin{proof}
The proof is similar to proofs using the Dirichlet form to bound the spectral
gap~\cite[p. 181]{levin2009markov}.

Recall that Gibbs sampling is a reversible Markov chain.  That is, if $P$ is
the transition matrix associated with Gibbs sampling on $G$, and $\pi$ is its
stationary distribution,
\[
  \pi(x) P(x, y) = \pi(y) P(y, x).
\]
If we let $\Pi$ be the diagonal matrix such that $\Pi(x, x) = \pi(x)$, then
we can write this as
\[
  P \Pi = \Pi P^T;
\]
that is, $A = P \Pi$ is a symmetric matrix.  Now, consider the 
form
\[
  \alpha(f)
  =
  \frac{1}{2}
  \sum_{x, y}
  \left(
    f(x)
    -
    f(y)
  \right)^2
  \pi(x)
  P(x, y).
\]
Expanding this produces
\begin{dmath*}
  \alpha(f)
  =
  \frac{1}{2}
  \sum_{x, y}
  f(x)^2
  \pi(x)
  P(x, y)
  -
  \sum_{x, y}
  f(x) f(y)
  \pi(x)
  P(x, y)
  +
  \frac{1}{2}
  \sum_{x, y}
  f(y)^2
  \pi(x)
  P(x, y).
\end{dmath*}
By reversibility of the chain,
\begin{dmath*}
  \alpha(f)
  =
  \frac{1}{2}
  \sum_{x, y}
  f(x)^2
  \pi(x)
  P(x, y)
  -
  \sum_{x, y}
  f(x) f(y)
  \pi(x)
  P(x, y)
  +
  \frac{1}{2}
  \sum_{x, y}
  f(y)^2
  \pi(y)
  P(y, x)
  =
  \sum_{x, y}
  f(x)^2
  \pi(x)
  P(x, y)
  -
  \sum_{x, y}
  f(x) f(y)
  \pi(x)
  P(x, y)
  =
  \sum_x
  f(x)^2
  \pi(x)
  \sum_y
  P(x, y)
  -
  \sum_{x, y}
  f(x) f(y)
  \pi(x)
  P(x, y)
  =
  \sum_x
  \pi(x)
  f(x)^2
  -
  \sum_{x, y}
  f(x) f(y) A(x, y)
  =
  f^T \Pi f - f^T A f
  =
  f^T (I - P) \Pi f.
\end{dmath*}
Similarly, if we define
\[
  \beta(f)
  =
  \frac{1}{2}
  \sum_{x, y}
  \left(
    f(x)
    +
    f(y)
  \right)^2
  \pi(x)
  P(x, y),
\]
then the same logic will show that
\[
  \beta(f)
  =
  f^T (I + P) \Pi f.
\]
We define $\bar A$, $\bar \alpha$ and $\bar \beta$ similarly for Gibbs
sampling on $\bar G$.

Now, by
Lemmas \ref{lemmaROFDistBound} and \ref{lemmaROFTransBound},
\begin{dmath*}
  \alpha(f)
  =
  \frac{1}{2}
  \sum_{x, y}
  \left(
    f(x)
    -
    f(y)
  \right)^2
  \pi(x)
  P(x, y)
  \ge
  \frac{1}{2}
  \sum_{x, y}
  \left(
    f(x)
    -
    f(y)
  \right)^2
  \exp(-2M)
  \bar \pi(x)
  \bar P(x, y)
  =
  \exp(-2M) \bar \alpha(f).
\end{dmath*}
It follows that
\[
  f^T (I - P) \Pi f \ge \exp(-2M) f^T (I - \bar P) \bar \Pi f.
\]
We also notice that, by Lemma \ref{lemmaROFDistBound},
\begin{dmath*}
  f^T \left(\Pi - \pi \pi^T \right) f
  =
  f^T \Pi f - (\pi^T f)^2
  =
  \sum_x f(x)^2 \pi(x)
  -
  \left( \sum_y f(y) \pi(y) \right)^2
  =
  \sum_x \left(
    f(x) - \sum_y f(y) \pi(y)
  \right)^2 \pi(x)
  \le
  \sum_x \left(
    f(x) - \sum_y f(y) \bar \pi(y)
  \right)^2 \pi(x)
  \le
  \exp(M)
  \sum_x \left(
    f(x) - \sum_y f(y) \bar \pi(y)
  \right)^2 \bar \pi(x)
  =
  \exp(M)
  \left(
  \sum_x f(x)^2 \bar \pi(x)
  -
  \left( \sum_y f(y) \bar \pi(y) \right)^2
  \right)
  =
  \exp(M) f^T \left(\bar \Pi - \bar \pi \bar \pi^T \right) f
\end{dmath*}
Therefore,
\begin{equation}
  \label{eqnROFMin2}
  \frac{
    f^T (I - P) \Pi f
  }{
    f^T \left( \Pi - \pi \pi^T \right) f
  }
  \ge
  \exp(-3M)
  \frac{
    f^T (I - \bar P) \bar \Pi f
  }{
    f^T \left( \bar \Pi - \bar \pi \bar \pi^T \right) f
  }.
\end{equation}

Now, let's explore the quantity
\begin{equation}
  \label{eqnROFMin1}
  \min_f
  \frac{
    f^T (I - P) \Pi f
  }{
    f^T \left( \Pi - \pi \pi^T \right) f
  }.
\end{equation}
First, we notice that
\[
  (I - P) \Pi \mathbf{1}
  =
  (I - P) \pi
  =
  \pi - \pi
  =
  0,
\]
so $\mathbf{1}$ is an eigenvector of $(I - P) \Pi$ with eigenvalue $0$.
Furthermore,
\[
  \left( \Pi - \pi \pi^T \right) \mathbf{1}
  =
  \left( \pi - \pi \pi^T \mathbf{1} \right)
  =
  \pi - \pi
  =
  0,
\]
so $\mathbf{1}$ is also an eigenvector of $(I - P) \Pi$ with eigenvalue $0$.
It follows that (\ref{eqnROFMin1}) is invariant to additions of the vector
$\mathbf{1}$ to $f$, and in particular we can therefore choose to minimize over
only those vectors for which $\pi^T f = 0$.  Therefore,
\begin{dmath*}
  \min_f
  \frac{
    f^T (I - P) \Pi f
  }{
    f^T \left( \Pi - \pi \pi^T \right) f
  }
  =
  \min_{\pi^T g = 0}
  \frac{
    g^T (I - P) \Pi g
  }{
    g^T \Pi g
  }
  =
  \min_{\pi^T g = 0}
  \frac{
    g^T \Pi^{\frac{1}{2}} \Pi^{-\frac{1}{2}}
    (I - P)
    \Pi^{\frac{1}{2}} \Pi^{\frac{1}{2}}
    g
  }{
    g^T \Pi g
  }.
\end{dmath*}
If we let $h = \Pi^{\frac{1}{2}} g$, then
\begin{dmath*}
  \min_f
  \frac{
    f^T (I - P) \Pi f
  }{
    f^T \left( \Pi - \pi \pi^T \right) f
  }
  =
  \min_{\mathbf{1}^T \Pi^{\frac{1}{2}} h = 0}
  \frac{
    h^T \Pi^{-\frac{1}{2}}
    (I - P)
    \Pi^{\frac{1}{2}} h
  }{
    h^T h
  }.
\end{dmath*}
Notice that $\Pi^{-\frac{1}{2}} (I - P) \Pi^{\frac{1}{2}}$ is similar to
$I - P$, and so it must have the same eigenvalues.  Since it is symmetric,
it has an orthogonal complement of eigenvectors, and the eigenvector that
corresponds to $\lambda = 0$ is
\[
  \left(\Pi^{-\frac{1}{2}} (I - P) \Pi^{\frac{1}{2}} \right)
  \left(
    \Pi^{\frac{1}{2}} \mathbf{1}
  \right)
  =
  \Pi^{-\frac{1}{2}} (I - P) \pi
  =
  \Pi^{-\frac{1}{2}} (\pi - \pi)
  =
  0.
\]
The expression above is minimizing over all vectors orthogonal to this
eigenvector; it follows that the minimum will be the second smallest eigenvalue
of $I - P$.  This eigenvalue will be $1 - \lambda_2$, where $\lambda_2$ is
the second-largest eigenvalue of $P$.  So,
\[
  \min_f
  \frac{
    f^T (I - P) \Pi f
  }{
    f^T \left( \Pi - \pi \pi^T \right) f
  }
  =
  1 - \lambda_2.
\]
The same argument will show that
\[
  \min_f
  \frac{
    f^T (I - \bar P) \bar \Pi f
  }{
    f^T \left( \bar \Pi - \bar \pi \bar \pi^T \right) f
  }
  =
  1 - \bar \lambda_2.
\]
Therefore, minimizing both sides in (\ref{eqnROFMin2}) produces
\[
  1 - \lambda_2
  \ge
  \exp(-3M)
  \left(1 - \bar \lambda_2 \right).
\]
And applying the definition of absolute spectral gap,
\[
  1 - \lambda_2
  \ge
  \exp(-3M)
  \bar \gamma
\]

Now, a similar argument to the above will show that
\[
  \frac{
    f^T (I + P) \Pi f
  }{
    f^T \Pi f
  }
  \ge
  \exp(-3M)
  \frac{
    f^T (I + \bar P) \bar \Pi f
  }{
    f^T \bar \Pi f
  }.
\]
And we will be able to conclude that
\[
  \min_f
  \frac{
    f^T (I + P) \Pi f
  }{
    f^T \Pi f
  }
  =
  1 - \lambda_n,
\]
where $\lambda_n$ is the (algebraically) smallest eigenvalue of $P$,
and similarly for $\bar \lambda_n$.  Therefore, we can conclude that
\[
  1 + \lambda_n
  \ge
  \exp(-3M)
  \left(1 + \bar \lambda_n \right),
\]
and from the definition of absolute spectral gap,
\[
  1 + \lambda_n
  \ge
  \exp(-3M)
  \bar \gamma.
\]
Therefore,
\[
  \gamma
  \ge
  \min(1 - \lambda_2, 1 + \lambda_n)
  \ge
  \exp(-3M) \bar \gamma.
\]
This proves the lemma.
\end{proof}

Next, we restate and prove Lemma \ref{lemmaNoFactors}

\lemmaNoFactors*
\begin{proof}
Gibbs sampling on a factor graph with one variable and no factors will
have transition matrix
\[
  P = \pi \mathbf{1}^T,
\]
where $\pi$ is the stationary distribution, and $\mathbf{1}$ is the vector
of all $1$s.  That is, the process achieves the stationary distribution in a
single step.  The only eigenvalues of this matrix are $0$ and $1$, so the
absolute spectral gap will be $1$, as desired.
\end{proof}

\section{Proofs of Other Results}

In this section, we will prove the other results about hierarchy decomposition
stated in Section \ref{secHierarchyWidth}.
First, we express it in
terms of a hypergraph decomposition; this is how hypertree width is
defined so it is useful for comparison.

\begin{restatable}[Hierarchy Decomposition]{definition}{defHierarchyDecomp}
\label{defHierarchyDecomp}
A hierarchy decomposition of a hypergraph $G = \langle N, E \rangle$ is a 
rooted tree
$T$ where each node $v$ of $T$ is labeled with a set $\chi(v)$ of edges of
$G$, that satisfies the following conditions:
(1) for each edge $e \in E$, there is some node $v$ of $T$ such that
  $e \in \chi(v)$;
(2) if for two nodes $u$ and $v$ of $T$ and for some edge $e \in E$, both
  $e \in \chi(u)$ and $e \in \chi(v)$, then for all nodes $w$ on the (unique)
  path between $u$ and $v$ in $T$, $e \in \chi(w)$;
(3) for every pair of edges $e \in E$ and $f \in E$, if
  $e \cap f \ne \emptyset$, then there exists a node $v$ of $T$ such that
  $\{e, f\} \subseteq \chi(v)$; and
(4) if node $u$ is the parent of node $v$ in $T$, then $u \subseteq v$.

The width of a hierarchy decomposition is the size of the largest node in $T$,
that is, $\max_v \Abs{\chi(v)}$.
\end{restatable}

\begin{restatable}{statement}{stmtHierarchyDecomp}
\label{stmtHierarchyDecomp}
The \emph{hierarchy width} of a hypergraph $G$ is
equal to the minimum width of a hierarchy decomposition of $G$.
\end{restatable}

In this section, we denote the \emph{width} of a hierarchy decomposition $T$ as
$\mathsf{w}(T)$.

\begin{lemma}
\label{lemmaHDSingleChild}
Let $T$ be a hierarchy decomposition of a hypergraph $G$.  Let $u$ be a node of
$T$ that has exactly one child $v$, and let $e$ be the edge connecting $u$ and
$v$.  Let $\bar T$ be the graph minor of $T$ that is formed by removing $u$
from $T$ and, if applicable, connecting $v$ with the parent of $u$; assume that
the nodes $\bar T$ have the same labeling as the corresponding nodes in $T$.
Then $\bar T$ is a hierarchy decomposition of $G$, and
\[
  \mathsf{w}(T) = \mathsf{w}(\bar T).
\]
\end{lemma}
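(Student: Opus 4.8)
The plan is to verify that $\bar T$ satisfies each of the four conditions of Definition~\ref{defHierarchyDecomp} and then to check that the width is unchanged. Throughout I would keep the labeling $\chi$ fixed, write $p$ for the parent of $u$ in $T$ whenever it exists, and lean on condition~(4), which supplies the chain of inclusions $\chi(p) \subseteq \chi(u) \subseteq \chi(v)$. The structural fact underlying everything is that $\bar T$ is obtained from $T$ by suppressing the degree-two node $u$: every tree edge of $T$ not incident to $u$ survives in $\bar T$, while the two tree edges incident to $u$ are replaced by a single edge joining $p$ and $v$ (and if $u$ is the root, $v$ simply becomes the new root).

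First I would handle the width together with conditions~(1), (3), and~(4), all of which fall out of the inclusion $\chi(u) \subseteq \chi(v)$. For the width, since $\Abs{\chi(u)} \le \Abs{\chi(v)}$ and $v$ is retained, deleting $u$ cannot lower the maximum label size, so $\mathsf{w}(\bar T) = \mathsf{w}(T)$. For~(1), any hyperedge lying in $\chi(u)$ also lies in $\chi(v)$, so coverage survives the deletion of $u$. For~(3), if an intersecting pair of hyperedges was co-located only at $u$ in $T$, then that pair is co-located at $v$ as well. For~(4), the only parent--child pair of $\bar T$ not inherited verbatim from $T$ is $(p, v)$, and $\chi(p) \subseteq \chi(v)$ follows by transitivity of the two inclusions above; when $u$ is the root there is no new pair to check.

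The step I expect to be the main obstacle is the connectivity condition~(2). Here I would fix a hyperedge $f$, set $S_f = \{ w : f \in \chi(w) \}$ (a connected subtree of $T$ by~(2)), and argue that $S_f \setminus \{u\}$ is connected in $\bar T$. If $u \notin S_f$, the subtree uses only tree edges not incident to $u$, all present in $\bar T$, so connectivity is immediate. If $u \in S_f$, then $f \in \chi(u) \subseteq \chi(v)$ forces $v \in S_f$, and I would split on whether $p \in S_f$. When $p \notin S_f$ (in particular when $u$ is the root), $u$ is a leaf of $S_f$, so $S_f \setminus \{u\}$ is a connected subtree avoiding $u$ and the previous case applies. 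When $p \in S_f$, deleting $u$ breaks $S_f$ into a piece $A \ni p$ and a piece $B \ni v$, but the new edge joining $p$ and $v$ in $\bar T$ reconnects them; this is precisely the sub-case where replacing $u$ by that edge (rather than merely deleting $u$) is essential, and it is the heart of the argument. Once all four conditions are checked, together with the width equality this establishes that $\bar T$ is a hierarchy decomposition of $G$ with $\mathsf{w}(\bar T) = \mathsf{w}(T)$.
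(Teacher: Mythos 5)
Your proposal is correct and follows essentially the same route as the paper: verify the four conditions of the hierarchy decomposition definition using the inclusion $\chi(u) \subseteq \chi(v)$ from condition (4), and note that deleting $u$ cannot change the width since $v$ carries at least as large a label. The only cosmetic difference is in condition (2), where the paper disposes of connectivity with the single observation that any path in $\bar T$ is contained in the corresponding path in $T$, while you carry out the equivalent connected-subtree case analysis explicitly.
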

\begin{proof}
We validate the conditions of the definition of hierarchy decomposition
individually.
\begin{enumerate}
  \item Since $T$ is a hierarchy decomposition of $G$, by condition (1),
    for any hyperedge $e$ of
    $G$, there exists a node $x$ of $T$ such that $e \in \chi_T(x)$.  If
    $x \ne u$, then $e \in \chi_{\bar T}(x)$.  Otherwise, by condition (4) of
    $T$, it will hold that
    $\chi_T(u) \subseteq \chi_T(v) = \chi_{\bar T}(v)$, so
    $e \in \chi_{\bar T}(v)$. Since this is true for any hyperedge $e$, the
    condition holds.
  \item Since we constructed $\bar T$ by removing a node from $T$ and connecting
    its neighbors, it follows that
    if the path from node $x$ to node $y$ in $\bar T$ passes through a
    node $z$, then the path from $x$ to $y$ in $T$ also passes through $z$.
    The condition then follows directly from condition (2) of $T$.
  \item The argument here is the same as the argument for condition (1).
  \item This condition follows directly from transitivity of the subset
    relation.
\end{enumerate}
Therefore, $\bar T$ is a hierarchy decomposition of $G$.  To show that 
$\mathsf{w}(T) = \mathsf{w}(\bar T)$, it suffices to notice that removing
node $u$ from $T$ can't change its width, since node $v$ has at least as many
hyperedges in its labeling as $u$.  This proves the lemma.
\end{proof}

\begin{lemma}
\label{lemmaHDSameLabel}
Let $T$ be a hierarchy decomposition of a hypergraph $G$.  Let $u$ be a node of
$T$, and let $v$ be a child of $T$ such that $\chi_T(u) = \chi_T(v)$.
Let $\bar T$ be the graph minor of $T$ that is formed by identifying $u$ and
$v$ by contracting along their connecting edge. Assume that
the nodes $\bar T$ have the same labeling as the corresponding nodes in $T$,
and that the new node, $w$, has $\chi_{\bar T}(w) = \chi_T(u) = \chi_T(v)$.
Then $\bar T$ is a hierarchy decomposition of $G$, and
\[
  \mathsf{w}(T) = \mathsf{w}(\bar T).
\]
\end{lemma}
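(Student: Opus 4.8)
The plan is to follow the same template as the proof of Lemma~\ref{lemmaHDSingleChild}: verify the four defining conditions of a hierarchy decomposition for $\bar T$ one at a time, and then check that the contraction leaves the width unchanged. Throughout, the fact I would lean on is that the hypothesis $\chi_T(u) = \chi_T(v)$ means that for every hyperedge $e$ we have $e \in \chi_T(u)$ if and only if $e \in \chi_T(v)$, and that the merged node $w$ inherits exactly this common label, $\chi_{\bar T}(w) = \chi_T(u) = \chi_T(v)$. Since $v$ is a child of $u$, after contraction $w$ has as parent the former parent $p$ of $u$, and as children the remaining children of $u$ together with all children of $v$.

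Conditions (1) and (3) are immediate. For (1), any $e$ covered by a node $x \neq u,v$ of $T$ is still covered by $x$ in $\bar T$, and if $e$ was covered by $u$ or $v$, then $e \in \chi_{\bar T}(w)$. The identical observation applied to a pair $\{e,f\}$ with $e \cap f \neq \emptyset$ gives (3). For condition (4), I would check only the new parent--child pairs created at $w$: the parent $p$ of $u$ satisfies $\chi_T(p) \subseteq \chi_T(u) = \chi_{\bar T}(w)$; each former child $c$ of $u$ (other than $v$) satisfies $\chi_{\bar T}(w) = \chi_T(u) \subseteq \chi_T(c)$; and each former child $c$ of $v$ satisfies $\chi_{\bar T}(w) = \chi_T(v) \subseteq \chi_T(c)$. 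All other parent--child pairs are untouched, so (4) holds. Handling $v$'s children is precisely where the equality $\chi_{\bar T}(w) = \chi_T(v)$ is used.

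The main obstacle is condition (2), the running-intersection property, since it is the condition sensitive to how paths behave under edge contraction. Rather than chasing paths through $w$, I would argue via connected subtrees. Fix a hyperedge $e$; condition (2) for $T$ says that the set $S_e$ of nodes whose label contains $e$ is a connected subtree of $T$. Because $\chi_T(u) = \chi_T(v)$, the nodes $u$ and $v$ are either both in $S_e$ or both outside it. If neither lies in $S_e$, then $S_e$ avoids the contracted edge and survives unchanged, with $w \notin S_e$ in $\bar T$. If both lie in $S_e$, then the edge $uv$ is interior to the connected subtree $S_e$, and contracting an interior edge of a subtree again yields a subtree; its image is connected and contains $w$. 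In either case the $\bar T$-nodes whose label contains $e$ form a connected subtree, which is condition (2). The crucial point---and the reason the hypothesis cannot be dropped---is that $\chi_T(u) = \chi_T(v)$ rules out the case where exactly one of $u,v$ lies in $S_e$, the only situation in which contraction could threaten connectivity.

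Finally, for the width I would observe that $\bar T$ has the same labels as $T$ except that the two equal-sized labels $\chi_T(u)$ and $\chi_T(v)$ are replaced by the single label $\chi_{\bar T}(w)$ of that same size. Deleting one of two equal copies from the multiset of label sizes does not change its maximum, so $\mathsf{w}(\bar T) = \mathsf{w}(T)$, which completes the argument.
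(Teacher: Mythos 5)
Your proof is correct and takes essentially the same approach as the paper's: verify the four conditions of a hierarchy decomposition for $\bar T$ one at a time and observe that the multiset of node labels changes only by deleting one of two identical copies, so the width is preserved. Your treatment of condition (2) via connected subtrees (using $\chi_T(u)=\chi_T(v)$ to rule out the case where exactly one of $u,v$ carries a given hyperedge) is a more careful rendering of the paper's one-line path argument, but it is the same underlying idea.
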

\begin{proof}
We validate the conditions of the definition of hierarchy decomposition
individually.
\begin{enumerate}
  \item The set of labelings of $\bar T$ is exactly the same as the set of
    labelings of $T$.  Since $T$ is a hierarchy decomposition of $G$, the
    condition follows directly from condition (1) of $T$.
  \item Similarly, the set of labelings of any path in $\bar T$ is the same
    as the set of labeling of the corresponding path in $T$.  The condition
    then follows directly from condition (2) of $T$.
  \item The argument here is the same as the argument for condition (1).
  \item The labeling of every node's parent in $\bar T$ will be the same as
    the labeling of that node's parent in $T$.  Therefore, the condition follows
    from condition (3) of $T$.
\end{enumerate}
Therefore, $\bar T$ is a hierarchy decomposition of $G$.  To show that 
$\mathsf{w}(T) = \mathsf{w}(\bar T)$, it suffices to notice the set of
labelings of nodes of $\bar T$ is the same as the set of labelings of nodes of
$T$; therefore they must have the same width.
\end{proof}

\begin{lemma}
\label{lemmaHierarchyDecompForward}
For any factor graph $G$ and for any hierarchy decomposition $T$ of $G$,
\[
  \mathsf{w}(T) \ge \mathsf{hw}(G).
\]
\end{lemma}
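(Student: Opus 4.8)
The plan is to induct on the number of factors $e = \Abs{E}$ of $G$, exploiting the fact that the two pieces of the hierarchy decomposition definition mirror the two cases of the $\mathsf{hw}$ recursion: the root label of a decomposition plays the role of the factors removed first in (\ref{eqnHWConnected}), while restricting a decomposition to a single connected component realizes the split in (\ref{eqnHWDisconnected}). The base case $e = 0$ is immediate: by (\ref{eqnHWBase}) we have $\mathsf{hw}(G) = 0 \le \mathsf{w}(T)$.

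For the disconnected case, given components $G_1, G_2, \ldots$, I would build $T_i$ from $T$ by replacing each label $\chi(v)$ with $\chi(v) \cap E(G_i)$. A routine verification of conditions (1)--(4) of Definition \ref{defHierarchyDecomp} shows each $T_i$ is a hierarchy decomposition of $G_i$, and plainly $\mathsf{w}(T_i) \le \mathsf{w}(T)$. Components with no factors contribute $\mathsf{hw} = 0$ and are harmless; each remaining (connected) component satisfies $\mathsf{w}(T_i) \ge \mathsf{hw}(G_i)$ by the connected argument below, which only invokes the inductive hypothesis at strictly smaller factor counts. Hence $\mathsf{w}(T) \ge \max_i \mathsf{hw}(G_i) = \mathsf{hw}(G)$ by (\ref{eqnHWDisconnected}).

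The heart of the proof is the connected case. Here I would first locate a \emph{deepest} node $r^*$ of $T$ whose rooted subtree $T^*$ still contains every factor of $G$, and then establish that $\chi(r^*) \ne \emptyset$. Granting this, choose any $\phi^* \in \chi(r^*)$. By condition (4), the label $\chi(r^*)$ is contained in the label of every node of $T^*$, so $\phi^*$ belongs to every such label; deleting $\phi^*$ from all labels of $T^*$ therefore produces a hierarchy decomposition $T^{*\prime}$ of $G - \phi^*$ whose width is exactly $\mathsf{w}(T^*) - 1$. Since $G - \phi^*$ has $e - 1$ factors, the inductive hypothesis gives $\mathsf{w}(T^{*\prime}) \ge \mathsf{hw}(G - \phi^*)$, and combining with (\ref{eqnHWConnected}) yields $\mathsf{w}(T) \ge \mathsf{w}(T^*) = \mathsf{w}(T^{*\prime}) + 1 \ge \mathsf{hw}(G - \phi^*) + 1 \ge \mathsf{hw}(G)$.

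The step I expect to be the main obstacle is proving $\chi(r^*) \ne \emptyset$, since this is where connectivity of $G$ must be tied together with conditions (2)--(4). The argument I would give is by contradiction: if $\chi(r^*)$ were empty, then by conditions (2) and (4) the set of nodes holding any fixed factor $f$ is a downward-closed connected subtree whose top node lies strictly below $r^*$, so each factor resides entirely within a single child-subtree of $r^*$; condition (3) forces any two factors sharing a variable into the same child-subtree; and because $G$ is connected, all its factors are transitively linked through shared variables, so they would all land in one child-subtree. That subtree would then contain every factor while being strictly deeper than $r^*$, contradicting the choice of $r^*$. This nonemptiness claim is the delicate part, and I would take care there (in particular checking that ``connected factor graph'' indeed means any two factors are joined by a chain of variable-sharing factors); the remaining reductions are bookkeeping against the definitions already in hand.
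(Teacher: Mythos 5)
Your proof is correct, and the core combinatorial content matches the paper's: both arguments ultimately rest on the observation that conditions (2)--(4) force each factor's set of supporting nodes to be a full subtree, and condition (3) plus connectivity of $G$ then forces a nonempty label at the ``top'' of the decomposition. But the organization is genuinely different. The paper runs a multiple induction (on the number of vertices and hyperedges of $G$ \emph{and} on the number of nodes of $T$) and does a five-way case split on the root of $T$, using two auxiliary normalization lemmas (Lemmas \ref{lemmaHDSingleChild} and \ref{lemmaHDSameLabel}) to contract away empty or redundant nodes until the root either is nonempty or witnesses a disconnection of $G$; its Case 4 is exactly the contrapositive of your claim that $\chi(r^*) \ne \emptyset$ when $G$ is connected. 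You instead induct only on the number of factors, case on the connectivity of $G$ rather than on the shape of $T$, and replace the normalization machinery by directly selecting a deepest node $r^*$ whose subtree covers all factors. This buys a cleaner, well-founded single induction and avoids the contraction lemmas entirely; the price is that you must verify by hand that the restricted subtree $T^*$ (and the component restrictions $T_i$) remain hierarchy decompositions, checks the paper's normalization lemmas package up once and reuse. One small point worth making explicit in a final write-up: in your nonemptiness argument you should note that each $S_f$ lies entirely inside $T^*$ (if $f$ appeared at a node outside the subtree rooted at $r^*$, the path to a node of $T^*$ containing $f$ would pass through $r^*$ and force $f \in \chi(r^*)$), which is what lets you conclude $t_f$ is a proper descendant of $r^*$; you gesture at this but it is the one place where the argument could be misread.
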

\begin{proof}
As in the proof of Lemma \ref{lemmaBoundASG}, we will prove this result by
multiple induction.
In what follows, we assume that the statement
holds for 
all graphs with either fewer vertexes and no more hyperedges than $G$,
or fewer hyperedges and no more vertexes than $G$.
We also assume that for a particular $G$, the statement holds for all
hierarchy decompositions of $G$ that have fewer nodes than $T$.

There are five possibilities:
\begin{enumerate}
  \item The root $r$ of $T$ is labeled with $\chi(r) = \emptyset$, and $r$
    has no children.
  \item The root $r$ of $T$ is labeled with $\chi(r) = \emptyset$, and $r$
    has one child.
  \item The root $r$ of $T$ is labeled with $\chi(r) = \emptyset$, and $r$
    has two or more children, at least one of which, $x$, is also labeled with
    $\chi(x) = \emptyset$.
  \item The root $r$ of $T$ is labeled with $\chi(r) = \emptyset$, and $r$ has
    two or more children, none of which are labeled with the empty set.
  \item The root $r$ of $T$ is labeled with $\chi(r) \ne \emptyset$.
\end{enumerate}
We consider these cases separately.

\paragraph{Case 1}
If $r$ is labeled with the empty set and has no children, then it follows from
condition (1) that there are no hyperedges in $G$.  Therefore,
$\mathsf{hw}(G) = 0$.  Also, since $r$ is labeled with the empty set,
$\mathsf{w}(T) = 0$.  So, the statement holds in this case.

\paragraph{Case 2}
If $r$ has exactly one child, then by Lemma \ref{lemmaHDSingleChild}, there
is a hierarchy decomposition $\bar T$ of $G$ such that $\bar T$ has fewer nodes
than $T$ and $\mathsf{w}(T) = \mathsf{w}(\bar T)$.  By the inductive hypothesis,
\[
  \mathsf{w}(\bar T) \ge \mathsf{hw}(G).
\]
So, the statement holds in this case.

\paragraph{Case 3}
If $r$ is labeled with the empty set and has two or more children, at least one
of which is also labeled with the
empty set, then by Lemma \ref{lemmaHDSameLabel}, there is a hierarchy
decomposition $\bar T$ of $G$ such that $\bar T$ has fewer nodes
than $T$ and $\mathsf{w}(T) = \mathsf{w}(\bar T)$.  By the inductive hypothesis,
\[
  \mathsf{w}(\bar T) \ge \mathsf{hw}(G).
\]
So, the statement holds in this case.

\paragraph{Case 4}
Consider the case where $r$ is labeled with the empty set and has two or more
children, none of which are labeled with the empty set.  Let
$T_1, T_2, \ldots, T_l$ be the labeled subtrees rooted at the children of $r$.
Notice that if a hyperedge $e$ appears in the labeling of some node of $T_i$,
then it can't appear in the labeling of any node of $T_j$ for $i \ne j$, since
otherwise by condition (2) it must appear in the labeling
of the root node $r$, and $r$ has an empty labeling.  Therefore, the hyperedges
of $G$ are partitioned among the subtrees $T_i$.  Furthermore, condition (3)
implies that hyperedges associated with different subtrees are disconnected,
and therefore that graph $G$ is disconnected.  Therefore, if we let $G_i$
denote the $m$ connected components of $G$.
\[
  \mathsf{hw}(G) = \max_{i \le m} \mathsf{hw}(G_i).
\]
Now, let $\bar G_i$, for $i \le l$, denote the subgraph of $G$ that consists
of the nodes that are connected to a hyperedge in a labeling of a node of $T_i$.
Clearly, the $\bar G_i$ will be disconnected, so
\[
  \mathsf{hw}(G) = \max_{i \le l} \mathsf{hw}(\bar G_i).
\]
Each $\bar G_i$ will also have fewer vertexes and hyperedges than $G$, so
by the inductive hypothesis,
\[
  \mathsf{w}(T_i) \ge \mathsf{hw}(\bar G_i).
\]
Therefore,
\[
  \mathsf{w}(T)
  =
  \max_{i \le l} \mathsf{w}(T_i)
  \ge
  \max_{i \le l} \mathsf{hw}(\bar G_i)
  =
  \mathsf{hw}(G);
\]
so the statement holds in this case.

\paragraph{Case 5}
Finally, consider the case where the root node $r$ is labeled with some
hyperedge $e$.  Let $\bar G$ be the graph that results from removing $e$ from
$G$, and let $\bar T$ be the hierarchy decomposition that results from removing
$e$ from all the labellings of $T$.  Clearly, $\bar T$ will be a hierarchy
decomposition of $\bar G$.  Furthermore, $\bar G$ has fewer hyperedges and the
same number of vertexes as $G$, so by the inductive hypothesis,
\[
  \mathsf{w}(\bar T) \ge \mathsf{hw}(\bar G).
\]
Furthermore, since removing $e$ decreases the size of each of the labelings of
$\bar T$ by $1$,
\[
  \mathsf{w}(\bar T) = \mathsf{w}(T) - 1.
\]
Therefore,
\[
  \mathsf{w}(\bar T) \ge \mathsf{hw}(\bar G) + 1 \ge \mathsf{hw}(G),
\]
so the statement holds in this case.

Therefore, the statement holds in all cases, so the lemma follows from
induction on all hypergraphs and all hierarchy decompositions.
\end{proof}

\begin{lemma}
\label{lemmaHierarchyDecompBackward}
For any factor graph $G$, there exists a hierarchy decomposition $T$ of $G$
such that
\[
  \mathsf{w}(T) = \mathsf{hw}(G).
\]
\end{lemma}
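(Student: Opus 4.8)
The plan is to prove this as the exact converse of Lemma~\ref{lemmaHierarchyDecompForward}: rather than lower-bounding the width of an arbitrary decomposition, I would explicitly \emph{construct} a hierarchy decomposition whose width equals $\mathsf{hw}(G)$. Since the construction should mirror the recursive definition of $\mathsf{hw}$, I would run the same multiple induction used in Lemmas~\ref{lemmaBoundASG} and~\ref{lemmaHierarchyDecompForward}, assuming the claim for every factor graph with fewer factors (and no more variables) or fewer variables (and no more factors), and splitting on whether $G$ has no factors, is connected with at least one factor, or is disconnected.

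For the base case, a factor graph with no factors admits the single-node tree whose root is labeled $\emptyset$; this satisfies all four decomposition conditions vacuously and has width $0 = \mathsf{hw}(G)$. For a disconnected $G$ with connected components $G_1, \ldots, G_m$, I would apply the inductive hypothesis (each $G_i$ has strictly fewer variables) to get decompositions $T_i$ with $\mathsf{w}(T_i) = \mathsf{hw}(G_i)$, and then form $T$ by adding a fresh root $r$ with $\chi(r) = \emptyset$ and attaching each $T_i$ as a child. Because no factor spans two components and any two intersecting factors share a variable and hence lie in the same component, conditions (1)--(3) reduce to their validity inside the individual $T_i$; condition (4) holds at $r$ since $\emptyset$ is a subset of every child's label. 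The width is $\max_i \mathsf{w}(T_i) = \max_i \mathsf{hw}(G_i) = \mathsf{hw}(G)$ by (\ref{eqnHWDisconnected}).

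The connected case is the crux and reverses Case~5 of Lemma~\ref{lemmaHierarchyDecompForward}. Let $\phi^*$ attain the minimum in (\ref{eqnHWConnected}) and set $\bar G = \langle V, \Phi - \{\phi^*\} \rangle$, so $\mathsf{hw}(G) = 1 + \mathsf{hw}(\bar G)$. As $\bar G$ has one fewer factor, the hypothesis supplies a decomposition $\bar T$ with $\mathsf{w}(\bar T) = \mathsf{hw}(\bar G)$. I would build $T$ by adding $\phi^*$ to the label of every node of $\bar T$. Since $\phi^*$ then occupies all nodes, its occurrence set is connected (condition (2)), and every factor that shares a variable with $\phi^*$ co-occurs with it in some node (the new instances of condition (3)). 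Conditions (1), the remaining instances of (3), and (4) are inherited from $\bar T$, using that deleting a factor alters neither the vertex set nor which variables the other factors touch, so no intersection among the original factors is created or destroyed. Each label grows by exactly one because $\phi^* \notin \bar G$, giving $\mathsf{w}(T) = \mathsf{w}(\bar T) + 1 = \mathsf{hw}(G)$.

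The main point requiring care is that deleting $\phi^*$ may disconnect a connected $G$; this is harmless because $\mathsf{hw}$ is defined on disconnected graphs and the induction is on factor/variable counts rather than on connectedness, so $\bar T$ is still furnished by the hypothesis regardless of how $\bar G$ breaks apart. The only verification-heavy step is re-checking the four axioms after each construction---especially condition (3), that intersecting factors co-occur, and condition (4), the nestedness of labels along root-to-leaf paths. Both become immediate once one notes that placing $\phi^*$ at every node keeps the root's label a subset of all descendants' labels and automatically discharges every intersection involving $\phi^*$. Combining this lemma with Lemma~\ref{lemmaHierarchyDecompForward} then yields Statement~\ref{stmtHierarchyDecomp}.
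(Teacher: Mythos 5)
Your proposal is correct and follows essentially the same route as the paper's proof: the same multiple induction on variable and factor counts, the same three cases (no factors, disconnected, connected), the same constructions (empty-labeled root joining component decompositions; adding the minimizing factor $\phi^*$ to every label of the decomposition of $\bar G$), and the same verification of the four decomposition conditions. Your added remark that deleting $\phi^*$ may disconnect $G$ is a sound observation that the paper leaves implicit.
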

\begin{proof}
As in the proof of Lemma \ref{lemmaBoundASG}, we will prove this result by
multiple induction.
In what follows, we assume that the statement
holds for 
all graphs with either fewer vertexes and no more hyperedges than $G$,
or fewer hyperedges and no more vertexes than $G$.

There are three possibilities:
\begin{enumerate}
  \item $G$ has no hyperedges.
  \item $G$ is disconnected.
  \item $G$ is connected.
\end{enumerate}
We consider these cases separately.

\paragraph{Case 1}
If $G$ has no hyperedges, then the tree $T$ with a single node labeled with the
empty set is a hierarchy decomposition for $G$.  It will satisfy
\[
  \mathsf{w}(T) = 0.
\]
Also, by (\ref{eqnHWBase}), for the case with no hyperedges,
\[
  \mathsf{hw}(G) = 0,
\]
so the statement holds in this case.

\paragraph{Case 2}
Assume that $G$ is disconnected, and its connected components are $G_i$.
Then by (\ref{eqnHWDisconnected}),
\[
  \mathsf{hw}(G) = \max_i \mathsf{hw}(G_i).
\]
Since each $G_i$ has fewer nodes and hyperedges than $G$, by the inductive
hypothesis there exists a hierarchy decomposition $T_i$ for each $G_i$ such
that
\[
  \mathsf{w}(T_i) = \mathsf{hw}(G_i).
\]
Let $T$ be the tree that has a root node labeled with the empty set, and
where the subtrees rooted at its children are exactly the $T_i$ above.
We now show that $T$ is a hierarchy decomposition for $G$ by validating the
conditions.
\begin{enumerate}
  \item Any hyperedge $e$ of $G$ must be a hyperedge of exactly one $G_i$. Since
    $T_i$ is a hierarchy decomposition of $G_i$, by condition (1) it follows
    that $e$ appears in the labeling of some node of $G_i$.  Therefore, $e$
    will appear in the labeling of the corresponding node of $G$, and the 
    condition holds.
  \item Again, any hyperedge $e$ of $G$ must be a hyperedge of exactly one
    $G_i$.  Therefore, if $e$ appears in two nodes of $T$, those two nodes
    must both be part of the same subtree $T_i$.  The condition follows from
    the corresponding condition of $T_i$.
  \item Since $G$ is disconnected, any pair of hyperedges in $G$ that share
    a vertex must both be part of exactly one $G_i$.  The condition then
    follows from the corresponding condition of $T_i$.
  \item This condition follows directly from the fact that for any $X$,
    $\emptyset \subseteq X$, and from the corresponding condition for the 
    subgraphs $T_i$.
\end{enumerate}
Therefore $T$ is a hierarchy decomposition for $G$.  Furthermore, since the
labelings of nodes of $T$ are the union of the labelings of nodes of $T_i$,
\[
  \mathsf{w}(T) = \max_{i} \mathsf{w}(T_i).
\]
Therefore,
\[
  \mathsf{hw}(G)
  =
  \max_i \mathsf{hw}(G_i)
  =
  \max_i \mathsf{w}(T_i)
  =
  \mathsf{w}(T),
\]
so the statement holds in this case.

\paragraph{Case 3}
Assume that $G$ is connected. Then by (\ref{eqnHWConnected}),
\[
  \mathsf{hw}(G) = 1 + \min_{e \in E} \mathsf{hw}(\langle N, E - \{e\}\rangle).
\]
Let $b$ be a hyperedge that minimizes this quantity, and let $\bar G$ be
the graph that results from removing this hyperedge from $G$.  Then,
\[
  \mathsf{hw}(G) = 1 + \mathsf{hw}(\bar G\rangle).
\]
Now, $\bar G$ has fewer hyperedges than $G$, so by the inductive hypothesis,
there exists a hierarchy decomposition $\bar T$ of $\bar G$ such that
\[
  \mathsf{hw}(\bar G) = \mathsf{w}(\bar T).
\]
Let $T$ be the tree that results from adding edge $b$ to every labeling of a
node of $\bar T$.  Clearly,
\[
  \mathsf{w}(T) = \mathsf{w}(\bar T) + 1.
\]
We now show that $T$ is a hierarchy decomposition for $G$ by validating the
conditions.
\begin{enumerate}
  \item Any hyperedge $e$ of $G$ is either hyperedge $b$ or some hyperedge in
    $\bar G$.  If it is $b$, then it must appear in a labeling of a node of $T$
    since it appears in all such labelings.  Otherwise, the condition follows
    from the corresponding condition of $\bar T$.
  \item Again, any hyperedge $e$ of $G$ is either hyperedge $b$ or some
    hyperedge in $\bar G$.  If it is $b$, then the condition follows from the
    fact that all node labelings of $T$ contain $b$.  Otherwise, the condition
    follows from the corresponding condition of $\bar T$.
  \item For any pair of edges $(e, f)$ of $G$, either $b \notin \{e, f\}$, or
    $b \in \{e, f\}$.  If $b \in \{e, f\}$, then the condition follows from
    condition (1) of $\bar T$ and the fact that all node labelings of $T$
    contain $b$; otherwise, the condition follows from the corresponding
    condition of $\bar T$.
  \item This condition follows directly from the fact that for any $X$ and $Y$,
    if $X \subseteq Y$, then $X \cup \{ b \} \subseteq Y \cup \{ b \}$.
\end{enumerate}
Therefore $T$ is a hierarchy decomposition for $G$, and
\[
  \mathsf{w}(T)
  =
  \mathsf{w}(\bar T) + 1
  =
  \mathsf{hw}(\bar G) + 1
  =
  \mathsf{hw}(G),
\]
so the statement holds in this case.

Therefore the statement holds in all cases, so the lemma follows from induction
over all factor graphs.
\end{proof}

Next, we restate and prove Statement \ref{stmtHierarchyDecomp}.

\stmtHierarchyDecomp*
\begin{proof}
This statement follows directly from Lemmas \ref{lemmaHierarchyDecompForward}
and \ref{lemmaHierarchyDecompBackward}.
\end{proof}

Now, we provide a definition for hypertree width, and prove the comparison
statements from the body of the paper.

\begin{definition}[Hypertree Decomposition~\cite{gottlob2014treewidth}]
A \emph{hypertree decomposition} of a hypergraph $G$ is a structure
$\langle T, \rho, \chi \rangle$, where $\langle  T, \rho \rangle$ is a tree
decomposition for the primal graph of $G$, $\chi$ is a labeling of the nodes of
$T$ with hyperedges of $\chi$, and the following conditions are satisfied:
(1) for any node $u$ of $T$, and for all vertices $x \in \rho(u)$, there exists
an edge $e \in \chi(u)$ such that $x \in e$; and
(2) for any node $u$ of $T$, for any edge $e \in \chi(u)$, and for any
descendant $v$ of $u$ in $T$,
$e \cap \rho(v) \subseteq \rho(u)$.

The \emph{width} of a hypertree decomposition, denoted $\mathsf{w}(T)$, is
defined (as in the hierarchy decomposition case) to be the cardinality of the
largest hyperedge-labeling of a node of $T$.  That is,
\[
  \mathsf{w}(T) = \max_u \Abs{\chi(u)}.
\]
The \emph{hypertree width} $\mathsf{tw}(G)$ of a graph $G$ is the minimum width
among all hypertree decompositions of $G$. 
\end{definition}

We restate and prove Statement \ref{stmtHierarchyWidthComparison}.

\stmtHierarchyWidthComparison*
\begin{proof}
Consider the hierarchy decomposition $T$ of $G$ that satisfies
$\mathsf{w}(T) = \mathsf{hw}(G)$.  Assume that we augment $T$ with an additional
node labeling $\rho(v)$ that consists of the vertexes of $G$ that are part of
at least one edge of $\chi(v)$ (that is, $\rho(v) = \cup \chi(v)$).  We show
that $T$ is a hypertree decomposition of $G$.

First, we must show that $\langle T, \rho \rangle$ is a tree decomposition for
the primal graph of $G$; we do so by verifying the conditions independently.
\begin{enumerate}
  \item Each vertex of $G$ must appear in the labeling of some node of $T$,
    since (by assumption) it must appear in some hyperedge of $G$, and each
    hyperedge appears in the labeling of some node of $T$ because $T$ is a
    hierarchy decomposition of $G$.
  \item Similarly, any two vertexes $(x, y)$ that are connected by an edge in
    the primal graph of $G$ must appear together in some hyperedge of $G$.
    Since each hyperedge of $G$ appears in the edge-labeling of some node of
    $T$, it follows that $x$ and $y$ must both appear in the vertex-labeling of
    that node.
  \item Finally, assume that vertex $x$ appears in the labeling of two
    different nodes $u$ and $v$ of $T$.  Since $x \in \rho(u)$, there must
    exist an edge $e \in \chi(u)$ such that $x \in e$; similarly, there must
    be an edge $f \in \chi(v)$ such that $x \in f$.  By condition (3) of
    the definition of hierarchy decomposition, there must exist a node $w$ of
    $T$ such that $\{ e, f \} \subseteq \chi(w)$.  By condition (2) of the
    definition of hierarchy decomposition, for all nodes $z$ on the path
    between $u$ and $w$, $e \in \chi(z)$; similarly, for all nodes $z$ on the
    path between $v$ and $w$, $f \in \chi(z)$.  Now, any node $z$ on the path
    between $u$ and $v$ must be either part of the path between $u$ and $w$ and
    the path between $v$ and $w$, therefore either $e \in \chi(z)$ or
    $f \in \chi(z)$.  It follows that $x \in \rho(z)$, as desired.
\end{enumerate}
Therefore $T$ is a tree decomposition for the primal graph of $G$.

Next, we show that the $T$ is also a hypertree decomposition for $G$ by
verifying the conditions independently.
\begin{enumerate}
  \item For any node $u$ of $T$, and for all vertices $x \in \rho(u)$, there
    must exist an edge $e \in \chi(u)$ such that $x \in e$, because
    $\rho(u) = \cup \chi(v)$.  Therefore the condition holds.
  \item For any node $u$ of $T$ and for any
    descendant $v$ of $u$ in $T$, we know from condition (4) of the definition
    of hierarchy decomposition that $\chi(u) \subseteq \chi(v)$.  So,
    for any edge $e \in \chi(u)$, it also holds that $e \in \chi(v)$.
    Therefore,
    $e \cap \rho(v) = e \subseteq \rho(u),$ and so the condition holds.
\end{enumerate}
We conclude that $T$ is a hypertree decomposition for $G$.  But, its width
is $\mathsf{hw}(G)$.  Since the hypertree width of $G$ is the minimum width
among all hypertree decompositions of $G$, it follows that
\[
  \mathsf{tw}(G) \le \mathsf{hw}(G),
\]
which is the desired expression.
\end{proof}

Next, we restate and prove Statement \ref{stmtHierarchyWidthKPoly}.

\stmtHierarchyWidthKPoly*
\begin{proof}
Consider
Algorithm \ref{algComputeHWK}, which computes this quantity for a fixed $k$.

\begin{algorithm}[h]
  \caption{$\mathtt{HW}(G, k)$: Compute if $\mathsf{hw}(G) \le k$}
  \begin{algorithmic}
  \label{algComputeHWK}
    \IF{$G$ has no edges}
      \RETURN $\mathsf{true}$
    \ENDIF
    \IF{$k = 0$}
      \RETURN $\mathsf{false}$
    \ENDIF
    \FOR{$e \in \mathsf{edges}(G)$}
      \STATE Let $\bar G$ be the graph that results from removing $e$ from the
        connected component of $G$ that contains $e$
      \IF{$\mathtt{HW}(\bar G, k - 1)$}
        \RETURN $\mathsf{true}$
      \ENDIF
    \ENDFOR
    \RETURN $\mathsf{false}$
  \end{algorithmic}
\end{algorithm}

Each execution of $\mathtt{HW}$ requires at most linear time in the number of
hyperedges of $G$ to compute its connected components.  If we let $e$ be the
number of hyperedges of $G$, it also requires at most
$e$ executions of $G$ with parameter $k - 1$.  Therefore, $\mathtt{HW}$ will
run in time $O(e^k)$, which is polynomial in the number of hyperedges, as
desired.
\end{proof}

As an aside here, we note that the hierarchy width can be expressed in terms
of an existing graph parameter, the
\emph{tree-depth}~\citesec[p. 115]{de2012sparsity}, which we denote
$\mathsf{td}(G)$. To do this, we let
$L(G)$, the \emph{line graph} of $G$, denote the graph such that every factor
of $G$ is a node of $L(G)$, and two nodes of $L(G)$ are connected by an edge
if and only if their corresponding factors share a dependent variable.
Using this, it is trivial to show (by definition) that
\[
  \mathsf{hw}(G) = \mathsf{td}(L(G)).
\]
Since it is a known result that it is possible to compute whether a graph
has tree-depth at most $k$ in time polynomial (in fact, linear) in the number
of nodes of the graph, we could have also used this to prove Statement
\ref{stmtHierarchyWidthKPoly}; we avoided doing this to make the result more
accessible.

\begin{restatable}{statement}{stmtHierarchyWidthDegree}
\label{stmtHierarchyWidthDegree}
The hierarchy width of a factor graph $G$ is greater than or equal to the
maximum degree of a variable in $G$.
\end{restatable}

Finally, we restate and prove Statement \ref{stmtHierarchyWidthDegree}.

\stmtHierarchyWidthDegree*
\begin{proof}
The statement follows by induction.  Considering the parts of the definition
of hierarchy width individually,
removing a single hyperedge from a hypergraph only decreases the maximum
degree of the hypergraph by at most $1$, and splitting the hypergraph into
connected components doesn't change the maximum degree.
\end{proof}

\section{Proofs of Factor Graph Template Results}
In this section, we prove the results in Section
\ref{ssHierarchicalFactorGraphs}.  First, we prove
Lemma \ref{lemmaHierarchicalHW}.

\begin{restatable}{lemma}{lemmaHierarchicalHW}
\label{lemmaHierarchicalHW}
If $G$ is an instance of a hierarchical factor graph template $\mathcal{G}$
with $E$ template factors, then $\mathsf{hw}(G) \le E$.
\end{restatable}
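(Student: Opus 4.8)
The plan is to induct on the number $E$ of template factors, using the recursive definition of hierarchy width in (\ref{eqnHWConnected})--(\ref{eqnHWBase}) (equivalently, via the identity $\mathsf{hw}(G) = \mathsf{td}(L(G))$ recorded later, so that I am really bounding the tree-depth of the line graph $L(G)$ whose vertices are the instantiated factors and whose edges join factors sharing a variable). The foundation of the whole argument is the following structural property, which I would establish first: \emph{distinct instantiated factors coming from the same template factor share no variable}. Indeed, the instances $\phi_h$ of a template factor $\phi$ are indexed only by the head-symbol assignment $h$, since the body symbols are summed over. Because $\phi$ is hierarchical, every head symbol is a hierarchical symbol and hence occupies a leading argument position of \emph{every} term of $\phi$. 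So if $\phi_h$ and $\phi_{h'}$ both depended on a common variable $v$, the leading coordinates of $v$ would force $h$ and $h'$ to agree on all head symbols, i.e. $h = h'$. Consequently the $E$ template-factor classes partition the factors of $G$ into $E$ sets, each of which is an independent set in $L(G)$.

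For the inductive step I would select one template factor $\phi^*$ and delete all of its instances $\Phi^* = \{\phi^*_h\}$ at once. The graph $\bar G = \langle V, \Phi \setminus \Phi^* \rangle$ is exactly the instantiation, on the same dataset, of the hierarchical template obtained by dropping $\phi^*$; that template is still hierarchical and has $E-1$ template factors, so the inductive hypothesis gives $\mathsf{hw}(\bar G) \le E - 1$, and by (\ref{eqnHWDisconnected}) the same bound holds for every connected component of $\bar G$. It then remains to account for reintroducing $\Phi^*$. In the tree-depth picture I would build an elimination forest as follows: make the pairwise non-adjacent factors of $\Phi^*$ the roots, and hang the (height $\le E-1$) elimination forest of each component $C$ of $\bar G$ beneath the instance of $\phi^*$ that $C$ is adjacent to. Every edge of $L(G)$ is then an ancestor--descendant pair (edges inside a component are handled inductively, an edge from $\phi^*_h$ to $C$ runs from an ancestor to a descendant, there are no edges among the roots by independence, and none between distinct components), and the height is at most $1 + (E-1) = E$, giving $\mathsf{hw}(G) \le E$. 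The same construction can be phrased as a hierarchy decomposition (Statement~\ref{stmtHierarchyDecomp}) in which each node's label contains at most one instance from each of the $E$ classes, directly bounding the width by $E$.

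The hard part---and the step that genuinely uses the hierarchical assumption beyond mere independence---is guaranteeing that each component $C$ of $\bar G$ is adjacent to \emph{at most one} instance of $\phi^*$. This is exactly what lets the many instances of $\phi^*$ sit side by side as roots on a single level; if some component met two instances $\phi^*_h$ and $\phi^*_{h'}$, they would have to be chained one below the other (they are non-adjacent, so this is permitted), and the height would grow to $E+1$. This separation property is \emph{false} for a poorly chosen $\phi^*$: when one template factor with no head symbols produces a single ``global'' factor adjacent to many localized instances of a second template factor (a star in $L(G)$), deleting the \emph{localized} class leaves the global factor still adjacent to everything. The resolution is to choose $\phi^*$ to be an ``outermost'' / most general template factor (intuitively, one whose instances are the most coarsely addressed, e.g. having the fewest head symbols), and to prove that after removing such a class a shared variable between a factor of $C$ and an instance $\phi^*_h$ forces every factor of $C$ to agree with $h$ on the relevant leading hierarchical coordinates, so a second instance $\phi^*_{h'}$ cannot also reach $C$. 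Formalizing this choice and the agreement-of-leading-coordinates argument within the template formalism is the main obstacle; along the way I would also dispatch the easy boundary cases, namely $E=0$ (no factors, so $\mathsf{hw}=0$ by (\ref{eqnHWBase})) and $E=1$ (an edgeless $L(G)$, so $\mathsf{hw}=1$).
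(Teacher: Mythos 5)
Your plan is sound and can be completed, but it takes a genuinely different and longer route than the paper, and the step you single out as the main obstacle is precisely the step the paper's construction never has to confront. The paper does not induct on $E$ or delete template-factor classes one at a time; it directly exhibits a single hierarchy decomposition (Definition~\ref{defHierarchyDecomp}) and invokes Statement~\ref{stmtHierarchyDecomp}: the tree has one node for each tuple $(x_1,\ldots,x_m)$ of objects assigned to the first $m$ hierarchical symbols (for every $m$, including $m=0$), ordered by prefix, and the node $(x_1,\ldots,x_m)$ is labeled with every instantiated factor whose head assignment is a prefix of that tuple. The one structural fact this needs is the same one you isolate---a hierarchical factor's head assignment is read off the leading coordinates of every variable it touches, so two factors sharing a variable have prefix-comparable head assignments---and it is used once, globally, to verify the decomposition conditions and to see that each node's label holds at most one instance of each of the $E$ template factors, giving width $\le E$. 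Your induction, by contrast, additionally requires the ``each component of $\bar G$ meets at most one instance of $\phi^*$'' lemma, which you correctly observe fails for an arbitrary choice of $\phi^*$ and which forces you to order the classes by number of head symbols and prove a propagation-of-agreement claim along paths in $\bar G$. That claim does go through (every surviving factor has at least $p$ head symbols; adjacent survivors agree on their first $p$ head coordinates because both are prefixes of the shared variable's argument tuple; hence a whole component agrees on those $p$ coordinates and can reach only the single instance of $\phi^*$ they index), so your proof is completable---but note that unwinding your induction in ``fewest head symbols first'' order essentially rebuilds the paper's prefix tree level by level, with extra bookkeeping. One caveat applies to both arguments: everything hinges on the head symbols of the different template factors occupying the leading argument positions in one consistent global order (the paper's proof explicitly relies on the rules containing ``the same symbols in the same order''); your instinct that independence of same-class instances alone is not enough is exactly the right worry, and you would need to make this shared-ordering assumption explicit when formalizing your choice of the ``outermost'' $\phi^*$.
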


\lemmaHierarchicalHW*
\begin{proof}
We prove this result using the notion of
hierarchy decomposition from the previous section.
For the instantiated factor graph $G$, let $T$ be a tree where each node
$v$ of $T$ is associated with a particular assignment of the first $n$
head symbols of the rules. That is, there is a node of $T$ for each
tuple of object symbols $(x_1, \ldots, x_m)$---even for $m = 0$.
(Since the rules are hierarchical, and therefore
must contain the same symbols in the same order, these assignments are
well-defined.)

Label each node $v$ of $T$ with the set containing all
the instantiated factors of $G$ that agree with this assignment of
head variables.
That is, if the head variable assignment for a factor $\phi$ is
$(y_1, \ldots, y_p)$, then it will be in the labeling of the node
$v = (x_1, \ldots, x_m)$ if and only if $m \ge p$ and for all $i \le p$,
$y_i = x_i$.  It is not hard to show that this is a valid hierarchy
decomposition for this factor graph.

The width of this hierarchy decomposition is at most the number of
template factors $E$ because each node $v$ can only possibly contain a single
instantiated factor for each template factor.  (This is because for each
node, only one possible assignment of head variables is compatible with
that node.)  The Lemma now follows from an application of Statement
\ref{stmtHierarchyDecomp}.
\end{proof}

Next, we restate and prove Statement \ref{stmtHierarchicalFG}.

\stmtHierarchicalFG*
\begin{proof}
If we use either logical or ratio semantics, the maximum factor weight will be
bounded with $M = O(\log n)$.  We furthermore know from Lemma
\ref{lemmaHierarchicalHW} that for a particular
hierarchical template, the hierarchy width is bounded independent of the
dataset.  So, $h M = O(\log n)$, and the Statement now follows directly from
an application of Theorem \ref{thmPolynomialMixingTime} 
\end{proof}


\bibliographystylesec{plainnat} 
\bibliographysec{references}

}{}

\end{document}